\documentclass[sn-mathphys,Numbered]{sn-jnl}


\usepackage{graphicx}%
\usepackage{multirow}%
\usepackage{amsmath,amssymb,amsfonts}%
\usepackage{amsthm}%
\usepackage{mathrsfs}%
\usepackage[title]{appendix}%
\usepackage{xcolor}%
\usepackage{textcomp}%
\usepackage{manyfoot}%
\usepackage{booktabs}%
\usepackage{algorithm}%
\usepackage{algorithmicx}%
\usepackage{algpseudocode}%
\usepackage{listings}%

\usepackage{times}
\usepackage{latexsym}
\usepackage{amsmath}
\usepackage{amssymb}
\usepackage{amsthm}
\usepackage{amsfonts}
\newtheorem{Lemma}{Lemma}[section]
\newtheorem{Theorem}{Theorem}[section]

\usepackage[T1]{fontenc}
\usepackage[utf8]{inputenc}
\usepackage{microtype}
\usepackage{inconsolata}
\usepackage{makecell}
\usepackage{booktabs}
\usepackage{threeparttable}
\usepackage{multirow}
\usepackage{multicol}
\usepackage{booktabs}
\usepackage{graphicx}
\usepackage{subfigure}
\usepackage{changes}
\usepackage{lscape}

\raggedbottom

\begin{document}

\title[CMPU]{Constraint Multi-class Positive and Unlabeled Learning for Distantly Supervised Named Entity Recognition}


\author[1]{\fnm{Yuzhe} \sur{Zhang}}\email{zyz2020@mail.ustc.edu.cn}

\author[2]{\fnm{Min} \sur{Cen}}\email{cenmin0127@mail.ustc.edu.cn}

\author*[1]{\fnm{Hong} \sur{Zhang}}\email{zhangh@ustc.edu.cn}

\affil*[1]{\orgdiv{School of Management}, \orgname{University of Science and Technology of China}, \orgaddress{\street{No. 96 Jinzhao Road}, \city{Hefei}, \postcode{230026}, \state{Anhui}, \country{China}}}

\affil[2]{\orgdiv{School of Data Science}, \orgname{University of Science and Technology of China}, \orgaddress{\street{No. 96 Jinzhao Road}, \city{Hefei}, \postcode{230026}, \state{Anhui}, \country{China}}}


\abstract{Distantly supervised named entity recognition (DS-NER) has been proposed to exploit the automatically labeled training data by external knowledge bases instead of human annotations. However, it tends to suffer from a high false negative rate due to the inherent incompleteness. To address this issue, we present a novel approach called \textbf{C}onstraint \textbf{M}ulti-class \textbf{P}ositive and \textbf{U}nlabeled Learning (CMPU), which introduces a constraint factor on the risk estimator of multiple positive classes. It suggests that the constraint non-negative risk estimator is more robust against overfitting than previous PU learning methods with limited positive data. Solid theoretical analysis on CMPU is provided to prove the validity of our approach. Extensive experiments on two benchmark datasets that were labeled using diverse external knowledge sources serve to demonstrate the superior performance of CMPU in comparison to existing DS-NER methods.}

\keywords{Named Entity Recognition, Distantly Supervised Learning, Positive and Unlabeled Learning, Theoretical Analysis}



\maketitle

\section{Introduction}
\label{s:intro}
Named Entity Recognition (NER) is a crucial task in information extraction and various downstream applications, aiming to identify entity mentions in text and assign them to predefined categories. However, the reliance on human annotation for creating large-scale labeled training datasets poses challenges to state-of-the-art supervised deep learning methods, particularly in specialized domains. To overcome this limitation, distant supervision techniques have been exploited to automatically generate annotated training data based on external knowledge sources, such as dictionaries and knowledge bases (KBs). Several studies \citep{luan2017scientific, gabor2018semeval, giorgi2019end} have explored distant supervision for NER.

\begin{figure}[htbp]
    \centering
    \includegraphics[scale=0.38]{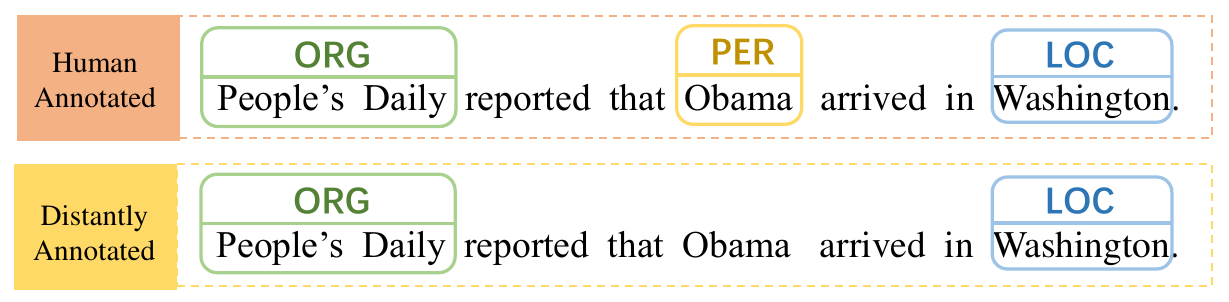}
    \caption{An annotated example with distant supervision. The entity "Obama" of type \texttt{PER} is not recognized.}
    \label{annotation}
\end{figure}

While previous distant supervision methods \citep{yang2018distantly, shang2018learning, mayhew2019named, cao2019low, peng2019distantly, liang2020bond, liu2021noisy, zhang2021biasing} have observed that distant labels obtained from dictionaries exhibit high precision, a significant incomplete labeling phenomenon persists. This arises from the fact that existing dictionaries and knowledge bases have limited coverage of entities. Consequently, simply considering unmatched tokens as negative samples leads to a high false negative rate (e.g., "Obama" is a false negative case in Figure \ref{annotation}) compared to human-annotated training data. This deficiency can misguide a supervised NER model to overfit false negative samples, adversely impacting its recall performance.

Recently, researchers have explored the application of binary Positive and Unlabeled (PU) learning to address the challenges in DS-NER tasks \citep{peng2019distantly}. PU learning is particularly suitable for handling distant supervision scenarios where external knowledge sources have limited coverage of positive samples, because it performs classification using a limited amount of labeled positive data and unlabeled data.

However, binary PU learning exhibits several drawbacks when applied to real-world DS-NER tasks. First, it relies on the one-vs-all strategy to convert the original multi-class classification problem into multiple binary classification problems. This approach becomes inefficient, particularly when dealing with numerous entity types. In such cases, training $N$ binary classifiers is required for an NER task with $N$ entity types. Second, the scale of predicted confidence values may vary among these binary classifiers, which can lead to challenges in ensuring mutually beneficial inferences for the final prediction.

Moreover, existing unbiased risk estimator of PU learning often faces the issue of overfitting when dealing with highly flexible models like deep neural networks, resulting in negative empirical risks. To address this issue, a non-negative risk estimator was proposed to maintain the risk calculated on unlabeled data from going negative \citep{kiryo2017positive}. However, this approach still struggles with overfitting as it fails to control the risk calculated on unlabeled data in time, leaving the classifier to suffer from overfitting.

To address these crucial challenges, we propose a novel method for PU learning known as \textbf{C}onstraint \textbf{M}ulti-class \textbf{P}ositive \textbf{U}nlabeled Learning (CMPU). Our approach includes a newly developed estimator for constraint non-negative risk, which not only ensures that the risk associated with unlabeled data remains non-negative but also maintains a dynamic balance between the risks of positive and unlabeled data. Our analysis indicates that when the risk of unlabeled data decreases rapidly, the classifier tends to overfit. Therefore, we introduce a constraint factor $\lambda$, to restrict the speed at which the risk of unlabeled data decreases. This helps maintain a dynamic equilibrium between the risks of positive and unlabeled data, thereby mitigating the overfitting problem.

In conclusion, our research makes the following key contributions:
\begin{itemize}
    \item We introduce CMPU, a novel and theoretically grounded approach for addressing the DS-NER task. CMPU incorporates solid theoretical analysis to enhance its practical applicability.
    \item Through extensive experiments, we demonstrate the effectiveness of CMPU's capability to mitigate the overfitting problem by balacing the positive and unlabeled risks.
    \item Our approach outperforms state-of-the-art DS-NER methods on benchmark datasets, highlighting its superior performance and effectiveness.
\end{itemize}

These contributions collectively contribute to advancing the field of DS-NER and provide valuable insights for addressing the challenges associated with incomplete annotations in distant supervision.

\section{Related Work}
\subsection{Distantly Supervised NER}
In recent years, there has been significant research attention on addressing the challenge of handling noisy labels, including false positives and false negatives, in the context of DS-NER \citep{yang2018distantly, shang2018learning, mayhew2019named, cao2019low, peng2019distantly, liang2020bond, liu2021noisy, zhang2021biasing}. Here we provide a concise overview of representative approaches in this field.

A line of work focuses on mitigating the impact of false negatives or incomplete labeling, which aligns with our research direction. To overcome the problem of incomplete labeling, some studies have employed partial annotation CRFs to consider all possible labels for unlabeled tokens \citep{shang2018learning, yang2018distantly}, but they still require a significant amount of annotated tokens or external tools. 
\cite{ni2017weakly} tackle the issue of label noise by implementing heuristic rules to eliminate sentences with low matching quality. Although this filtering strategy improved precision, it reduced recall.
\cite{shang2018learning} formulate AutoNER, a novel tagging scheme that identifies entity candidates by determining the connection status between adjacent tokens. The entity type is then determined for these candidates. To handle incomplete labeling, tokens with unknown tags are excluded from the loss calculation. 
\cite{mayhew2019named} introduce a constraint-driven iterative algorithm that detects false negatives in noisy data and assigns them lower weights. This leads to the creation of a weighted training set on which a weighted NER model is trained. 
\cite{peng2019distantly} propose AdaPU to prevent the model from overfitting to false negatives and perform a bounded non-negative positive-unlabeled learning approach. 
\cite{effland2021partially} make an assumption that the proportion of named entity tags compared to "O" tags remains relatively stable across different datasets with the same task. Based on this assumption, they suggest using a loss function called Expected Entity Ratio (EER) along with Partial Annotation Learning (PAL) loss for multi-task learning.

However, the application of binary PU learning in DS-NER is constrained by the underlying PU assumption and its limited efficiency. Our proposed method, CMPU, addresses these limitations and enables the wider utilization of PU learning in distant supervision scenarios. 

Another research direction aims to tackle both false positive and false negative annotation errors. \cite{cao2019low} propose a data selection scheme that computes scores for annotation confidence and annotation coverage. This scheme helps distinguish high-quality sentences from noisy ones, enabling the design of a name tagging model with two modules: sequence labeling and classification. The sequence labeling module focuses on high-quality portions, while the classification module targets the noisy portions. BOND \citep{liang2020bond} leverages the power of the pre-trained language models including BERT \citep{devlin-etal-2019-bert} and Roberta \citep{liu2019roberta}. It employs early stopping to prevent overfitting to noisy labels, resulting in an initialized model. Subsequently, BOND utilizes a teacher-student self-training framework to further enhance performance. \cite{liu2021noisy} propose a calibrated confidence estimation approach for DS-NER. They integrate this approach into an LSTM-CRF model within a self-training framework to mitigate the impact of noise. \cite{zhang2021biasing} explore the issue of noise in DS-NER from the perspective of dictionary bias. They begin by formulating DS-NER using a structural causal model and then identify the causes of false positives and false negatives. Then backdoor adjustment and causal invariance regularizer are employed to address the dictionary bias.

\subsection{PU Learning}
PU learning is a category of machine learning that trains a classifier using positive and unlabeled data \citep{elkan2008learning, du2014analysis}. While PU learning falls under the umbrella of semi-supervised learning, there is a fundamental difference between them: semi-supervised learning requires labeled negative data, whereas PU learning does not. In recent years, several advancements have enriched the theory of PU learning. \cite{kiryo2017positive} propose a non-negative risk estimator for PU learning, which enables the utilization of deep neural networks with limited labeled positive data. \cite{xu2017multi} introduce the concept of multi-positive and unlabeled learning with a margin maximization objective for multi-class classification problems. However, extending the margin maximization objective to popular deep learning architectures is not straightforward. \cite{hsieh2019classification} present a novel classification framework that incorporates biased negative data into PU learning, expanding the range of applications for PU learning.

\section{Methodology}
In this section, we introduce the proposed CMPU learning for DS-NER in the multi-class classification setting and provide solid theoretical analysis on CMPU.

\subsection{Learning Setup}
\label{setup}
Suppose that the data follow an unknown probability distribution $p(\mathbf{x}, y)$, where $\mathbf{x} \in \mathcal{X} \subset \mathbb{R}^{d}$ is the feature, and $\mathcal{Y}=\left\{0,1,2,\dots,C\right\}$ is the label space, where 0 refers to the negative class and $1,\dots,C$ refer to $C$ positive classes. The corresponding marginal distributions are $\mathbf{x} \sim p(\mathbf{x})$ and $P(y=c), c \in \mathcal{Y}$, respectively. 
Under the NER scenario, $\mathbf{x}$ represents the token embedding, and $y \in \left\{1,2,\dots,C\right\}$ means that this token belongs to certain entity category, while $y=0$ means that this token does not belong to any entity category.

Let $\pi_i=P(y=i), 1 \leq i \leq C$ be the prior of positive class $i$, and $\pi_0=P(y=0)$ be the prior of negative class, which satisfies $\sum_{i=0}^{C} \pi_i=1$. 
A decision function is denoted as $f \in \mathcal{H}: \mathcal{X} \rightarrow \mathcal{Y}$, where $\mathcal{H}$ is a function class. $f$ assigns a label to each token. 
The loss function is represented by $\ell(f(\mathbf{x}), y)$, which measures the difference between the predicted target $f(\mathbf{x})$ and ground-truth label $y$.

\textbf{Remark:} Here we  give an example to better illustrate the association between multi-class positive and unlabeled learning and NER task. 
Suppose that we are labeling a sentence 
$$\text{John Harris arrived at Texas Medical Center in Houston this afternoon},$$ 
and the ground-truth entities are $\text{Person}=\left\{\text{John Harris}\right\}$ and $\text{Location}=\left\{\text{Texas Medical Center, Houston}\right\}$. As a result, the entity types in this sentence are $\left\{\text{PER(Person)}, \text{LOC(Location)} \right\}$.
Following the learning setup in subsection \ref{setup}, we have $C=2$ and $\mathcal{Y}=\left\{0, 1, 2\right\}$.

The annotation format for NER tasks is BIO ($\textbf{B}$=beginning of an entity, $\textbf{I}$=inside an entity, and $\textbf{O}$=outside of an entity). The ground-truth label sequences under BIO annotation format is 
$$\text{B-PER \quad I-PER \quad O \quad O \quad B-LOC \quad I-LOC \quad I-LOC \quad O \quad B-LOC \quad O \quad O}.$$
For a distantly supervised NER setting, the label sequence may be
$$\text{B-PER \quad I-PER \quad O \quad O \quad O \quad O \quad O \quad O \quad B-LOC \quad O \quad O},$$
where entity "Texas Medical Center" of type LOC is missing because of the limited capacity of knowledge bases.
It is worth to mention that class "O" may either belong to the true negative (i.e., this token does not belong to any entity), or belong to certain positive class under the distantly supervised scenario.

It is worthy to mention that, NER tasks are commonly formulated as sequence labeling tasks in most researches. Sequence labeling tasks are common in natural language processing and speech recognition fields. Speech data can also be labeled in BIO format, which extends the applicability of our approach to other data structures.

\subsection{CMPU Learning}
The goal is to learn a decision function $f$ by minimizing the expected classification risk
\begin{equation}
    \label{mpn}
    R(f) = \sum_{i=1}^{C} \pi_i R_{P_i}^{+}(f) + \pi_0 R_{N}^{-}(f) = \sum_{i=1}^{C} \pi_i R_{P_i}^{+}(f) + \left(1 - \sum_{i=1}^{C} \pi_i\right) R_{N}^{-}(f). 
\end{equation}
where $R_{P_i}^{+}(f)=\mathbb{E}_{\mathbf{x} \sim p(\mathbf{x}|y=i)} \ell(f(\mathbf{x}), i)$ and $R_{N}^{-}(f)=\mathbb{E}_{\mathbf{x} \sim p(\mathbf{x}|y=0)} \ell(f(\mathbf{x}), 0)$ are the classification risk of the $i$th positive class and the negative class, respectively. We denote this classification risk as \textbf{MPN} (Multi-class Positive and Negative). Note that we do not have access to negative data in PU learning. The training instances are drawn from marginals of $p(\mathbf{x}, y)$ including positive instances $\mathcal{X}_{P_i}=\left\{\mathbf{x}_j^{P_i} \right\}_{j=1}^{n_{P_i}}, 1 \leq i \leq C$ and unlabeled instances $\mathcal{X}_{U}=\left\{\mathbf{x}_{j}^{U}\right\}_{j=1}^{n_{U}}$, where $n_{P_i}$ is the number of instances of the $i$th positive class, and $n_U$ is the number of unlabeled instances. Thus we cannot directly estimate Equation \eqref{mpn}. 

By the Law of Total Probability, we have
\begin{equation*}
    p(\mathbf{x}) = P(y=0)p(\mathbf{x}|y=0) + \sum_{i=1}^{C} P(y=i) p(\mathbf{x}|y=i),
\end{equation*}
which is equivalent to
\begin{equation*}
    \pi_0 p(\mathbf{x} | y=0) = p(\mathbf{x}) - \sum_{i=1}^{C} \pi_i p(\mathbf{x}|y=i).
\end{equation*}
Then we have
\begin{equation*}
    \begin{aligned}
        \left(1 - \sum_{i=1}^{C} \pi_i\right) R_{N}^{-}(f) &= \pi_0 R_{N}^{-}(f) \\ 
        &= \pi_0 \mathbb{E}_{\mathbf{x} \sim p(\mathbf{x}|y=0)} \ell(f(\mathbf{x}), 0) \\
        &= \int \pi_0 p(\mathbf{x}|y=0) \ell(f(\mathbf{x}), 0) d \mathbf{x} \\
        &= \int \left(p(\mathbf{x}) - \sum_{i=1}^{C} \pi_i p(\mathbf{x}|y=i) \right) \ell(f(\mathbf{x}), 0) d \mathbf{x} \\
        &= R^{-}(f) - \sum_{i=1}^{C} \pi_i R_{P_i}^{-}(f) \\
        &\overset{(i)}{=} R_{U}^{-}(f) - \sum_{i=1}^{C} \pi_i R_{P_i}^{-}(f),
    \end{aligned}
\end{equation*}
where $(i)$ is due to the basic assumption of PU learning that $p_{U}(\mathbf{x})=p(\mathbf{x})$.
Subsequently, we can transform Equation \eqref{mpn} into
\begin{equation}
    \label{mpu}
    R(f) = \sum_{i=1}^{C} \pi_i R_{P_i}^{+}(f) + R_{U}^{-}(f) - \sum_{i=1}^{C} \pi_i R_{P_i}^{-}(f),
\end{equation}
where $R_{U}^{-}(f)=\mathbb{E}_{\mathbf{x} \sim p_{U}(\mathbf{x})} \ell(f(\mathbf{x}), 0)$ and $R_{P_i}^{-}(f)=\mathbb{E}_{\mathbf{x} \sim p(\mathbf{x}|y=i)} \ell(f(\mathbf{x}), 0).$
We denote this classification risk as \textbf{MPU}(Multi-class Positive and Unlabeled), whose estimation requires the assumption that $p_{U}(\mathbf{x})=p(\mathbf{x})$ so that $\mathcal{X}_{U}$ can be used to estimate $R_{U}^{-}(f)$. It is easy to verify that MPN and MPU risks are equal in essence. For convenience in later discussion, we will collectively refer to MPN and MPU risks as $R(f)$. 

MPU can be na\"ively estimated by
\begin{equation}
    \label{mpu-estimator}
    \hat{R}_{MPU}(f) =\sum_{i=1}^{C} \pi_i \hat{R}_{P_i}^{+}(f) + \left(\hat{R}_{U}^{-}(f) - \sum_{i=1}^{C} \pi_i \hat{R}_{P_i}^{-}(f) \right),
\end{equation}
where $$\hat{R}^{+}_{P_i}(f)=\frac{1}{n_{P_i}} \sum_{j=1}^{n_{P_i}} \ell (f(\mathbf{x}_{j}^{P_i}), i),$$
$$\hat{R}^{-}_{P_i}(f)=\frac{1}{n_{P_i}} \sum_{j=1}^{n_{P_i}} \ell (f(\mathbf{x}_{j}^{P_i}), 0),$$
and 
$$\hat{R}_{U}^{-}(f)=\frac{1}{n_{U}} \sum_{j=1}^{n_{U}} \ell (f(\mathbf{x}_{j}^{U}), 0)$$ are the estimators of $R_{P_i}^{+}(f)$, $R_{P_i}^{-}(f)$, and $R_{U}^{-}(f)$, respectively.

When the second term of Equation \eqref{mpu-estimator} turns to negative, the model, such as flexible enough deep networks, always turns to overfit. \cite{kiryo2017positive} propose the non-negative risk estimator
\begin{equation}
    \label{mpu-nonnegative-estimator}
    \hat{R}_{MPU}(f) = \sum_{i=1}^{C} \pi_i \hat{R}_{P_i}^{+}(f) +  \max\left\{0, \hat{R}_{U}^{-}(f) - \sum_{i=1}^{C} \pi_i \hat{R}_{P_i}^{-}(f) \right\},
\end{equation}
which explicitly constrains the training risk of MPU to be non-negative and can mitigate overfitting to a certain extent. 

Although the MPU learning method is easy to understand, as previously stated, it faces two significant problems. Firstly, the constraint of non-negativity is merely a basic limitation. The MPU method only attempts to ensure the non-negative values of $\hat{R}_{U}(f) - \sum_{i=1}^{C} \pi_i \hat{R}_{P_i}^{-}(f)$, making it ineffective in controlling the risk $\hat{R}_{U}(f)$. As a result, the classifier has already experienced overfitting. If the trainable models, such as deep networks, exhibit a high level of flexibility, the risk associated with the unlabeled data $\hat{R}_{U}(f)$ would decrease rapidly. Consequently, the speed of overfitting would also rise. Secondly, the optimization approach of MPU suffers from a flaw, which focuses solely on maximizing the value of $\hat{R}_{U}(f) - \sum_{i=1}^{C} \pi_i \hat{R}_{P_i}^{-}(f)$ when it becomes negative. However, this approach leads to a classifier fitting well on unlabeled data but performing poorly on positive data. To address the aforementioned problems, we introduce a constraint MPU risk estimator called \textbf{CMPU}, which is defined as
\begin{equation}
    \label{cmpu-estimate}
        \hat{R}_{CMPU}(f) = \sum_{i=1}^{C} \pi_i \hat{R}_{P_i}^{+}(f) +  \max\left\{\lambda \sum_{i=1}^{C} \pi_i \hat{R}_{P_i}^{+}(f), \hat{R}_{U}^{-}(f) - \sum_{i=1}^{C} \pi_i \hat{R}_{P_i}^{-}(f) \right\},
\end{equation}
where $\lambda > 0$ is a constraint factor to avoid $\hat{R}_{U}^{-}(f)$ decline too quickly. 
The proposed $\hat{R}_{CMPU}(f)$ provides point-to-point remedial measures for the $\hat{R}_{MPU}(f)$. On the one hand, when 
$$\tau=\frac{\hat{R}_{U}^{-}(f) - \sum_{i=1}^{C} \pi_i \hat{R}_{P_i}^{-}(f)}{\sum_{i=1}^{C} \pi_i \hat{R}_{P_i}^{+}(f)} < \lambda,$$ CMPU prioritizes the risk of positive classes. The constraint of $\tau$ can successfully avoid $\hat{R}_{U}^{-}(f)$ decline too quickly. On the other hand, CMPU also controls the dynamic equilibrium between $\sum_{i=1}^{C} \pi_i \hat{R}_{P_i}^{+}(f)$ and $\hat{R}_{U}^{-}(f) - \sum_{i=1}^{C} \pi_i \hat{R}_{P_i}^{-}(f)$. In summary, these advantages make CMPU perform well on DS-NER task.

Following \cite{peng2019distantly}, we choose the mean absolute error (MAE) as the loss function for CMPU and other PU learning methods, instead of using common unbounded cross entropy loss, for the convenience of feasible training and theoretical analysis. Given label $\mathbf{y} \in \left\{0, 1\right\}^{C+1}$ in one-hot form, the loss on a token $\mathbf{x}$ is defined by
$$\ell(f(\mathbf{x}), \mathbf{y})=\frac{1}{C+1} \sum_{i=0}^{C} |y_i - 
f(\mathbf{x})_i|,$$
where $f(\mathbf{x}) \in \mathbb{R}^{C+1}$ is the softmax output, $y_i$ and $f(\mathbf{x})_i$ are the $i$th components of $\mathbf{y}$ and $f(\mathbf{x})$, respectively. It is easy to verify that $\ell(f(\mathbf{x}), \mathbf{y})$ is bounded between 0 and $\frac{2}{C+1}$.

\subsection{Theoretical Results}
\label{theory}
Let $\hat{f}_{CMPU}=\arg \min_{f \in \mathcal{H}} \hat{R}_{CMPU}(f)$, $f^{*}=\arg \min_{f \in \mathcal{H}} R(f)$. In this subsection, we establish the error bound of the consistency between the empirical risk $\hat{R}_{CMPU}(f)$ and the expected risk $R(f)$ in Theorem \ref{theo}, and the estimation error bound of $R(\hat{f}_{CMPU}) - R(f^{*})$ in Theorem \ref{theo2}. We relegate all proofs in Appendix.

\begin{Theorem}[Consistency]
    \label{theo}
    Assume that there exists a constant $\alpha$ such that $R_{U}^{-}(f) - \sum_{i=1}^{C} \pi_i R_{P_i}^{-}(f) - \lambda \sum_{i=1}^{C} \pi_i R^{+}_{P_i}(f) \geq \alpha$, and $\sup_{f \in \mathcal{H}} ||f||_{\infty} \leq C_f$ and $\sup_{|t| \leq C_f} \max_{0 \leq y \leq C} \ell(t, y) \leq C_{\ell}$, where $\mathcal{H}$ is a function class, $||f||_{\infty}$ is the infinity norm of function $f$, and $C_f > 0$, $C_{\ell} > 0$ are two positive numbers. For any $\delta > 0$, with probability at least $1 - \delta$, 
    \begin{equation*}
		|\hat{R}_{CMPU}(f) - R(f)|  
        \leq  C_{\delta} C_{\lambda} \left(\sum_{i=1}^{C} \frac{\pi_i}{\sqrt{n_{P_i}}} + \frac{1}{\sqrt{n_U}}\right) 
        +  \left((1 + \lambda) \sum_{i=1}^{C} \pi_i + 1 \right) C_{\ell} \Delta_{f},
    \end{equation*}
where 
\begin{equation*}
    \begin{aligned}
        C_{\delta}&=C_{\ell} \sqrt{\ln (2 / \delta) / 2}, \quad C_{\lambda}=\max\left\{2, 1 + \lambda \right\}, \\
        \Delta_{f}&=\exp\left(-\frac{2 (\alpha / C_{\ell})^2}{(1 + \lambda^2) \sum_{i=1}^{C} \pi_i^2 / n_{P_i} + 1 / n_{U}} \right).
    \end{aligned}
\end{equation*}
\end{Theorem}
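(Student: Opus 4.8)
The plan is to control $\hat{R}_{CMPU}(f)-R(f)$ by a bias-plus-fluctuation split around the mean of the estimator,
\[
|\hat{R}_{CMPU}(f)-R(f)|
\le \underbrace{|\hat{R}_{CMPU}(f)-\mathbb{E}\,\hat{R}_{CMPU}(f)|}_{\text{fluctuation}}
+ \underbrace{|\mathbb{E}\,\hat{R}_{CMPU}(f)-R(f)|}_{\text{bias}},
\]
matching the fluctuation to the first summand of the claimed bound and the bias to the second. Throughout I abbreviate $\hat{A}=\sum_{i=1}^{C}\pi_i\hat{R}_{P_i}^{+}(f)$ and $\hat{B}=\hat{R}_U^{-}(f)-\sum_{i=1}^{C}\pi_i\hat{R}_{P_i}^{-}(f)$, with $A,B$ their population counterparts, so that $R(f)=A+B$ by Equation \eqref{mpu}, and, since the component estimators are unbiased (using $p_U=p$ for $\hat{R}_U^{-}$), $\mathbb{E}\hat{A}=A$ and $\mathbb{E}\hat{B}=B$. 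The single identity driving everything is $\max\{\lambda\hat{A},\hat{B}\}=\hat{B}+\max\{0,\lambda\hat{A}-\hat{B}\}$, so $\hat{R}_{CMPU}(f)=\hat{A}+\hat{B}+\max\{0,\lambda\hat{A}-\hat{B}\}$.

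For the fluctuation I would apply McDiarmid's bounded-difference inequality to $\hat{R}_{CMPU}(f)$ as a function of the $n_U+\sum_i n_{P_i}$ independent samples (McDiarmid, not plain Hoeffding, because the $\max$ makes the estimator nonlinear in the empirical averages). Replacing one positive sample $\mathbf{x}_j^{P_i}$ perturbs $\hat{A}$ by at most $\pi_iC_{\ell}/n_{P_i}$ and simultaneously perturbs both arguments $\lambda\hat{A}$ and $\hat{B}$ of the $\max$; since $\max$ is $1$-Lipschitz in each argument, the $\max$ changes by at most $\max\{\lambda,1\}\,\pi_iC_{\ell}/n_{P_i}$, for a total bounded difference $(1+\max\{1,\lambda\})\pi_iC_{\ell}/n_{P_i}=C_{\lambda}\,\pi_iC_{\ell}/n_{P_i}$. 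Replacing one unlabeled sample moves only $\hat{B}$, giving a bounded difference $\le C_{\ell}/n_U\le C_{\lambda}C_{\ell}/n_U$ (loosened for uniformity). Summing squares gives a variance proxy $\le C_{\lambda}^2C_{\ell}^2\big(\sum_i\pi_i^2/n_{P_i}+1/n_U\big)$; McDiarmid with the bound set equal to $\delta$ yields, with probability at least $1-\delta$, a fluctuation at most $C_{\lambda}C_{\delta}\sqrt{\sum_i\pi_i^2/n_{P_i}+1/n_U}$, and subadditivity of the square root ($\sqrt{a+b}\le\sqrt{a}+\sqrt{b}$) upgrades this to $C_{\delta}C_{\lambda}\big(\sum_i\pi_i/\sqrt{n_{P_i}}+1/\sqrt{n_U}\big)$, i.e. the first summand.

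For the bias, unbiasedness collapses it to $\mathbb{E}\,\hat{R}_{CMPU}(f)-R(f)=\mathbb{E}[\max\{0,\lambda\hat{A}-\hat{B}\}]\ge 0$. The integrand is nonzero only on the event $\{\lambda\hat{A}>\hat{B}\}$, where it is bounded deterministically: using $0\le\hat{R}_{P_i}^{+}(f),\hat{R}_{P_i}^{-}(f),\hat{R}_U^{-}(f)\le C_{\ell}$, one has $\max\{0,\lambda\hat{A}-\hat{B}\}\le\lambda\hat{A}+|\hat{B}|\le\big((1+\lambda)\sum_i\pi_i+1\big)C_{\ell}$, the trailing $+1$ arising from the crude estimate $\hat{R}_U^{-}(f)\le C_{\ell}$ inside $|\hat{B}|$. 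The probability of that event is handled by Hoeffding applied to the \emph{linear} statistic $\hat{B}-\lambda\hat{A}$: its mean is $\mathbb{E}[\hat{B}-\lambda\hat{A}]=B-\lambda A\ge\alpha$ by hypothesis, so $\{\lambda\hat{A}>\hat{B}\}$ forces a downward deviation of at least $\alpha$, and with the per-sample ranges of $\hat{R}_U^{-}$ and of the pair $\hat{R}_{P_i}^{-},\lambda\hat{R}_{P_i}^{+}$ this gives probability at most $\Delta_f$. Multiplying the deterministic bound by $\Delta_f$ produces the second summand, $\big((1+\lambda)\sum_i\pi_i+1\big)C_{\ell}\Delta_f$.

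The routine ingredients are the unbiasedness identities and the two concentration steps; the delicate part, where I expect to spend the most care, is the constant bookkeeping. In the fluctuation step one must notice that a single positive sample feeds \emph{both} $\hat{A}$ and the two arguments of the $\max$, which is exactly what manufactures the factor $C_{\lambda}=\max\{2,1+\lambda\}$ rather than a plain constant. In the bias step the variance proxy governing $\Delta_f$ hinges on how the two loss evaluations $\ell(f(\mathbf{x}_j^{P_i}),0)$ and $\ell(f(\mathbf{x}_j^{P_i}),i)$ of the same positive sample are charged: accounting for their contributions separately yields the stated denominator $(1+\lambda^2)\sum_i\pi_i^2/n_{P_i}+1/n_U$, whereas charging the combined per-sample range would instead give a $(1+\lambda)^2$ factor, so one should confirm that the separated bookkeeping is the intended convention. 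Finally, the hypothesis enters only through the Hoeffding step, and it is worth recording that $\alpha>0$ is what makes this bias term genuinely decay.
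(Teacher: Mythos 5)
Your proposal is correct and is essentially the paper's own proof: the same decomposition of $|\hat{R}_{CMPU}(f)-R(f)|$ into fluctuation about $\mathbb{E}[\hat{R}_{CMPU}(f)]$ plus bias, the same McDiarmid bounded differences $C_{\lambda}C_{\ell}\pi_i/n_{P_i}$ and $C_{\ell}/n_U$ (your loosening of the unlabeled coordinate to $C_{\lambda}C_{\ell}/n_U$ is harmless since $C_{\lambda}\geq 1$), and the same treatment of the bias as $\mathbb{E}[\max\{0,\lambda\hat{A}-\hat{B}\}]$: supported on the paper's event $\mathcal{Q}^{-}(f)=\{\lambda\hat{A}>\hat{B}\}$, bounded there by $\left((1+\lambda)\sum_{i}\pi_i+1\right)C_{\ell}$, and multiplied by the deviation probability of the linear statistic $\hat{B}-\lambda\hat{A}$, exactly as in Lemma \ref{lemma3}.

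On the one point you asked to have confirmed: yes, the paper uses the separated bookkeeping, charging $\ell(f(\mathbf{x}_j^{P_i}),i)$ and $\ell(f(\mathbf{x}_j^{P_i}),0)$ as if they sat on independent coordinates, which is what produces the factor $1+\lambda^2$ in the denominator of $\Delta_f$. Your hesitation is justified: McDiarmid's inequality is applied per independent coordinate, the coordinate here is the single sample $\mathbf{x}_j^{P_i}$, and replacing it can move both loss evaluations in the same direction (even for the paper's MAE loss, where $\ell(f(\mathbf{x}),i)=\tfrac{2}{C+1}(1-f(\mathbf{x})_i)$ and $\ell(f(\mathbf{x}),0)=\tfrac{2}{C+1}(1-f(\mathbf{x})_0)$ increase together whenever probability mass shifts to a third class), so the tight bounded difference is $(1+\lambda)\pi_iC_{\ell}/n_{P_i}$ and the rigorously justified denominator carries $(1+\lambda)^2$, not $1+\lambda^2$. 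Since $(1+\lambda)^2>1+\lambda^2$ for $\lambda>0$, the honest version of the argument --- yours --- proves the theorem with a slightly larger (weaker) $\Delta_f$; the constant as printed is what Lemma \ref{lemma3} asserts but not what its McDiarmid step actually delivers. So this is a minor constant-level flaw in the paper that you have correctly isolated, not a gap in your proof; everything else in your write-up matches the intended argument.
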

Theorem \ref{theo} indicates that: (1) for fixed $f$, when $n_{P_i}, n_{U} \to \infty$, $\hat{R}_{CMPU}(f) \rightarrow R(f)$ in $\mathcal{O}_{p}(\sum_{i=1}^{C} \frac{\pi_i}{\sqrt{n_{P_i}}} + \frac{1}{\sqrt{n_{U}}})$; (2) the value of $\lambda$ should not be set too large, in order to obtain a fast convergence rate. 

\begin{Theorem}[Estimation error bound]
    \label{theo2}
    Assume that (a) $R_{U}^{-}(f) - \sum_{i=1}^{C} \pi_i R_{P_i}^{-}(f) - \lambda \sum_{i=1}^{C} \pi_i R^{+}_{P_i}(f) \geq \alpha$; (b) $\mathcal{H}$ is closed under negation, i.e., $f \in \mathcal{H}$ if and only if $-f \in \mathcal{H}$; (c) the loss function $\ell(t, y)$ is Lipschitz continuous in $t$ for all $|t| \leq C_{g}$ with a Lipschitz constant $L_{\ell}$, where $C_g$ is a positive number. For any $\delta > 0$, with probability at least $1 - \delta$,
    \begin{equation*}
        \begin{aligned}
            R(\hat{f}_{CMPU}) - R(f^{*}) \leq & (16 + 8 \lambda) L_{\ell} \sum_{i=1}^{C} \pi_i \mathfrak{R}_{n_{P_i}, p_{P_i}}(\mathcal{H}) + 8L_{\ell} \mathfrak{R}_{n_{U}, p}(\mathcal{H})  \\
            & + 2 C_{\delta}^{'} C_{\lambda} \left(\sum_{i=1}^{C} \frac{\pi_i}{\sqrt{n_{P_i}}} + \frac{1}{\sqrt{n_U}}\right) 
            + 2\left((1 + \lambda) \sum_{i=1}^{C} \pi_i + 1 \right) C_{\ell} \Delta_{f},
        \end{aligned}
    \end{equation*}
    where $C_{\delta}^{'}=C_{\ell} \sqrt{\ln(1/\delta) / 2}$, $C_{\lambda}, C_{\ell}$, and $\Delta_{f}$ are defined the same as those in Theorem \ref{theo}, and $\mathfrak{R}_{n_{P_i}, p_{P_i}}(\mathcal{H})$ and $\mathfrak{R}_{n_{U}, p}(\mathcal{H})$ are the Rademacher complexities of $\mathcal{H}$ for the sampling of size $n_{P_i}$ from $p_{P_i}(\mathbf{x})$ and of size $n_{U}$ from $p(\mathbf{x})$, respectively.
\end{Theorem}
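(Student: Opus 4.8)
The plan is to follow the standard template for estimation error bounds — reduce to a uniform deviation, then control that deviation by separating a bias part from a concentration part — while paying special attention to the $\max$ operator in $\hat{R}_{CMPU}$. Throughout I would abbreviate $\hat{A}(f) = \sum_{i=1}^{C}\pi_i\hat{R}_{P_i}^{+}(f)$ and $\hat{B}(f) = \hat{R}_{U}^{-}(f) - \sum_{i=1}^{C}\pi_i\hat{R}_{P_i}^{-}(f)$, with population analogues $A(f)$ and $B(f)$, so that $\hat{R}_{CMPU}(f) = \hat{A}(f) + \max\{\lambda\hat{A}(f), \hat{B}(f)\}$ while $\hat{R}_{MPU}(f) = \hat{A}(f) + \hat{B}(f)$ is unbiased for $R(f) = A(f) + B(f)$; note that assumption (a) gives $B(f) - \lambda A(f) \geq \alpha > 0$ for every $f \in \mathcal{H}$, so the population $\max$ always selects $B(f)$ and $R_{CMPU}(f) \equiv R(f)$. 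First I would reduce to a uniform deviation: inserting $\pm\hat{R}_{CMPU}(\hat{f}_{CMPU})$ and $\pm\hat{R}_{CMPU}(f^{*})$ and using the optimality $\hat{R}_{CMPU}(\hat{f}_{CMPU}) \leq \hat{R}_{CMPU}(f^{*})$ yields $R(\hat{f}_{CMPU}) - R(f^{*}) \leq 2\sup_{f\in\mathcal{H}}|\hat{R}_{CMPU}(f) - R(f)|$, which already explains the factor $2$ in front of the concentration and bias terms. I then split $\sup_f|\hat{R}_{CMPU}(f) - R(f)| \leq \sup_f|\hat{R}_{CMPU}(f) - \mathbb{E}\hat{R}_{CMPU}(f)| + \sup_f|\mathbb{E}\hat{R}_{CMPU}(f) - R(f)|$ into a concentration part and a bias part.

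For the bias part, unbiasedness of $\hat{R}_{MPU}$ gives $\mathbb{E}\hat{R}_{CMPU}(f) - R(f) = \mathbb{E}[\max\{\lambda\hat{A}(f),\hat{B}(f)\} - \hat{B}(f)] = \mathbb{E}[(\lambda\hat{A}(f) - \hat{B}(f))_{+}]$, which is nonzero only on the event $\{\hat{B}(f) - \lambda\hat{A}(f) < 0\}$. Since $\mathbb{E}[\hat{B}(f) - \lambda\hat{A}(f)] = B(f) - \lambda A(f) \geq \alpha$ uniformly in $f$, applying Hoeffding's inequality to $\hat{B}(f) - \lambda\hat{A}(f)$ — whose per-sample bounded-difference constants produce exactly the denominator $(1+\lambda^2)\sum_i\pi_i^2/n_{P_i} + 1/n_U$ — bounds this probability by $\Delta_f$; together with the magnitude estimate $(\lambda\hat{A} - \hat{B})_+ \leq ((1+\lambda)\sum_i\pi_i + 1)C_\ell$ this gives $\sup_f|\mathbb{E}\hat{R}_{CMPU}(f) - R(f)| \leq ((1+\lambda)\sum_i\pi_i+1)C_\ell\Delta_f$. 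The key point is that $\Delta_f$ is $f$-independent (it uses only the uniform constant $\alpha$), so this bias bound is automatically uniform over $\mathcal{H}$; this is essentially Theorem \ref{theo} re-used, and is why the same $\Delta_f$ reappears.

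For the concentration part I would apply McDiarmid's bounded-differences inequality to $\sup_f|\hat{R}_{CMPU}(f) - \mathbb{E}\hat{R}_{CMPU}(f)|$. Because $(\hat{A},\hat{B}) \mapsto \hat{A} + \max\{\lambda\hat{A},\hat{B}\}$ is Lipschitz with constant $C_\lambda = \max\{2, 1+\lambda\}$, a single-sample change perturbs $\hat{R}_{CMPU}$ by at most a $C_\lambda$-weighted per-sample increment, so McDiarmid yields, with probability $1-\delta$, that the deviation exceeds its expectation by at most $C_{\delta}^{'} C_\lambda(\sum_i\pi_i/\sqrt{n_{P_i}} + 1/\sqrt{n_U})$. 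The expectation $\mathbb{E}\sup_f|\cdots|$ is then controlled by Rademacher symmetrization followed by Talagrand's contraction lemma, where assumptions (b) and (c) enter: closure under negation lets me symmetrize and remove the absolute value, and the $L_\ell$-Lipschitz loss lets me peel the loss off $\mathcal{H}$. Bounding the Rademacher complexity of $\max\{\lambda\hat{A},\hat{B}\}$ by the sum of those of $\lambda\hat{A}$ and $\hat{B}$ (again by Lipschitzness of $\max$) and collecting the contributions of $\hat{R}_{P_i}^+$, $\hat{R}_{P_i}^-$ (both drawn from $p_{P_i}$) and $\hat{R}_U^-$ produces, after the final doubling, the coefficients $(16+8\lambda)$ on $\sum_i\pi_i\mathfrak{R}_{n_{P_i},p_{P_i}}(\mathcal{H})$ and $8$ on $\mathfrak{R}_{n_U,p}(\mathcal{H})$.

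I expect the main obstacle to be the clean treatment of the $\max$ in two places at once. In the bias term one must verify that the relevant large-deviation event is governed by the \emph{uniform} constant $\alpha$, so that $\Delta_f$ genuinely does not depend on $f$ (otherwise the bias would not be uniformly controllable); in the concentration term one must correctly propagate the Lipschitz constant of $\max$ both through McDiarmid (producing $C_\lambda$) and through the contraction step (producing the $\lambda$-dependent Rademacher coefficients), while keeping track of the fact that $\hat{R}_{P_i}^{+}$ and $\hat{R}_{P_i}^{-}$ are computed on the same positive samples. Once these two points are handled, combining the three pieces and multiplying by two is routine bookkeeping.
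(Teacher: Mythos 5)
Your proposal is correct and follows essentially the same route as the paper's own proof: the reduction $R(\hat{f}_{CMPU}) - R(f^{*}) \leq 2\sup_{f \in \mathcal{H}}|\hat{R}_{CMPU}(f)-R(f)|$, the split into a bias term controlled by the $\Delta_f$ deviation bound (Lemma \ref{lemma3}, re-using Theorem \ref{theo}'s argument) and a concentration term handled by McDiarmid's inequality plus symmetrization and Talagrand's contraction, with identical bookkeeping of the constants $C_\lambda$, $(16+8\lambda)$, and $8$. The only cosmetic difference is that you center at $\mathbb{E}[\hat{R}_{CMPU}(f)]$ and invoke Lipschitzness of the max directly, whereas the paper's text centers at $\mathbb{E}[\hat{R}_{MPU}(f)]$ and applies its Lemma \ref{lemma2}; your version is the internally consistent reading of the same argument.
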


\section{Experiments}
In this section, we evaluate the performance of the proposed CMPU and other standard approaches.
\subsection{Experimental Setup}

\noindent \textbf{Datasets} We analyze two benchmark NER datasets originating from distinct domains: (1) \textbf{BC5CDR}, sourced from the biomedical field, comprising 1,500 articles and encompassing 15,935 \texttt{Chemical} and 12,852 \texttt{Disease} entity mentions; (2) \textbf{CoNLL2003}, a renowned open-domain NER dataset, consisting of 1,393 English news articles and containing 10,059 \texttt{PER}, 10,645 \texttt{LOC}, 9,323 \texttt{ORG}, and 5,062 \texttt{MISC} entity mentions.

We acquire the subsequent distantly labeled datasets: (1) BC5CDR (Big Dict) is labeled using a dictionary \footnote{\url{https://github.com/shangjingbo1226/AutoNER}} provided in \cite{shang2018learning}; (2) BC5CDR (Small Dict) is labeled using a smaller dictionary created by selecting only the initial 20\% entries from the previous one; (3) CoNLL2003 (KB) \footnote{\url{https://github.com/cliang1453/BOND}} is labeled using the knowledge base Wikidata and made available in \cite{liang2020bond}; (4) CoNLL2003 (Dict) is labeled using an enhanced dictionary released by \cite{peng2019distantly} \footnote{\url{https://github.com/v-mipeng/LexiconNER}}. For dictionary labeling, we employ the strict string matching algorithm introduced in \cite{peng2019distantly}. The process of knowledge base labeling can be found in \cite{liang2020bond}.

Initially, we assess the distant annotation quality of the training data. Table \ref{app_table:stats} presents a comprehensive assessment of the distantly labeled training data, where the ground-truth data are human-annotated. The entity-level precision, recall, and $F_1$ scores are reported in Table \ref{app_table:stats}. The findings confirm the hypothesis from previous research that distant labels produced by dictionaries are typically of high precision but low recall.
\begin{table}[h]
    \caption{Entity-level evaluation results of the distantly annotated datasets based on human annotation.}
    \centering
    \begin{tabular}{llccc}
        \toprule
        \textbf{Datasets} & Entity Type & P. & R. & F1  \\
        \midrule
        \multirow{5}{*}{ \textbf{CoNLL03 (Dict)} } & PER &  91.49 & 72.80 & 81.08 \\
                                            & LOC &  96.87 & 31.60 & 47.65 \\
                                            & ORG & 87.01 & 55.86 & 68.04 \\
                                            & MISC & 95.74 & 48.37 & 64.27 \\
                                            & Overall & 91.62 & 52.15 & 66.47 \\
        \midrule
        \multirow{5}{*}{ \textbf{CoNLL03 (KB)} } & PER &  63.42 & 72.32 & 67.58 \\
                                            & LOC &  99.98 & 74.59 & 85.44 \\
                                            & ORG & 90.63 & 58.91 & 71.41 \\
                                            & MISC & 100.00 & 22.86 & 37.22 \\
                                            & Overall & 82.31 & 62.17 & 70.84 \\
        \midrule
        \multirow{3}{*}{ \textbf{BC5CDR (Big Dict)} } & Chemical &  95.10 & 76.17 & 84.59 \\
                                           & Disease &  81.20 & 55.48 & 65.92 \\
                                           & Overall & 89.45 & 66.95 & 76.58 \\
        \midrule
        \multirow{3}{*}{ \textbf{BC5CDR (Small Dict)} } & Chemical & 95.31 & 13.67 & 23.90 \\
                                           & Disease &  76.53 & 10.14 & 17.91 \\
                                           & Overall & 87.31 & 12.09 & 21.24 \\
        \bottomrule
    \end{tabular}
    \label{app_table:stats}
\end{table}

\noindent \textbf{Evaluation Metrics} Following previous works on studying NER tasks, all DS-NER techniques are trained using identical distantly labeled training data and assessed on the publicly available human-annotated test sets, considering strict entity-level micro precision, recall, and $F_1$ score, requiring both the entity type and boundary to exactly match the ground truth. Other than these three most popular metrics, we also report the token-level accuracy on test set. During the prediction phase, a continuous span with the same label is treated as a single entity.

Entity-level metrics can evaluate the performance of entity boundary identification better, but they ignore the model's ability to recognize other non-target entities. On the other hand, token-level metrics consider the accuracy of each token but may overlook the correctness of the entire entity and do not give enough consideration to the recognition of entity boundaries.
Therefore, when evaluating NER models, it is necessary to consider both entity-level and token-level metrics in order to fully assess the model's performance.

\noindent \textbf{Experimental Setup} 
In order to assess the effectiveness of DS-NER methods in practical usage within distantly supervised settings, we refrain from utilizing any human-annotated validation or test sets at any stage of the training process. The pretrained models are finetuned for 5 epochs on each dataset. Rather than reporting the performance of the best checkpoint, we present the results based on the final model. To reproduce the outcomes of AutoNER and BERT-ES, we adopt the provided code. For other methods, we report the results based on our own implementations. Pretrained \texttt{biobert-base-cased-v1.1} \footnote{\url{https://huggingface.co/dmis-lab/biobert-base-cased-v1.1}} and \texttt{bert-base-cased} \footnote{\url{https://www.huggingface.co/bert-base-cased}} are utilized for BC5CDR datasets and for CoNLL2003 datasets, respectively. The max input sequence length is restricted to 128 under all scenarios.

\subsection{Baselines}
\noindent \textbf{Fully-supervised methods.} We showcase the state-of-the-art (SOTA) performance achieved by fully supervised methods on the benchmark datasets BC5CDR \citep{wang2021improving} and CoNLL2003 \citep{wang2021automated}. The fully-supervised baselines include BERT, Maximum Entropy Model (MaxEnt), Conditional Random Fields (CRF), and BiLSTM-CRF. The reported results for the SOTA methods are obtained from their original papers. The performance of this group of models serves as upper-bound references.

\noindent \textbf{Distantly Supervised Methods.} We examine the following distantly supervised NER methods.
(1) KB Matching: This approach directly labels the test sets using dictionaries or knowledge bases.
(2) AutoNER \citep{shang2018learning}: Additional rules and dictionaries are applied to filter the distantly annotated datasets, and a novel tagging scheme is also introduced for the DS-NER task.
(3) BERT-ES \citep{liang2020bond}: This method employs early stopping to prevent BERT from overfitting to noisy distant labels.
(4) bnPU \citep{peng2019distantly}: bnPU applies binary PU learning risk estimation with mean absolute error (MAE) as the loss function to each entity type and infers the final types.
(5) MPU: This is the precursor of the proposed CMPU, which calculates the empirical risk using Equation \eqref{mpu-nonnegative-estimator}.
(6) Top-Neg \citep{xu-etal-2023-sampling}: This method employs the most similar negative instances, which exhibit high affinity with all positive samples for training.
(7) SANTA \citep{si-etal-2023-santa}: This approach introduces Memory-smoothed Focal Loss and Entity-aware KNN to relieve the entity ambiguity problem, together with a Boundary Mixup mechanism to enhance model resilience.

Please note that the full models in \cite{peng2019distantly} and \cite{liang2020bond} include self-training as post-processing steps, which are omitted in this evaluation. Our focus is primarily on assessing how well each model handles incomplete labeling issues in DS-NER tasks.

\subsection{Experimental Results}
\subsubsection{Main Results}
Table \ref{tab:main} displays the precision, recall, $F_1$ scores, and token-level accuracy for all considered methods on the test sets. CMPU outperforms most distantly supervised methods, with 12.5\% and 10.5\% increases of $F_1$ value and token-level accuracy across all of the four datasets, respectively, on average.
This demonstrates the superiority of our proposed risk estimator when trained on distantly-labeled data.
Although CMPU performs slightly poorer than the SOTA methods Top-Neg and SANTA in terms of $F_1$ values and token-level accuracy, it has its unique advantage of theoretical guarantees.

On the other hand, all PU learning-based methods show significantly higher recall on all datasets, indicating greater resilience to incomplete labeling. However, bnPU and MPU exhibit low precision on BC5CDR (Big Dict) despite performing well on BC5CDR (Small Dict). It is worth noting that although bnPU and MPU are based on the same probability principle, they may not necessarily perform similarly. bnPU is trained using the one-vs-all strategy, where the distribution of unlabeled data is different for each entity type, while MPU is simultaneously trained with all types keeping the same distribution of unlabeled data.

\begin{landscape}
\begin{table}[htbp]
    \caption{The entity-level metrics including precision ($\textbf{P.}$), recall ($\textbf{R.}$) and $F_1$ score ($\textbf{F}_1$), together with the token-level accuracy ($\textbf{Acc.}$) on test sets (in \%), where the bests are in bold. ``FS'', fully supervised; ``DS'', distantly supervised. The results under FS setting on BC5CDR (Big Dict) and BC5CDR (Small Dict), CoNLL2003 (KB) and CoNLL2003 (Big Dict) are identical because fully supervised methods learn with ground-truth labels instead of relying on the dictionary size.} 
    \centering
    \begin{tabular}{l@{\hspace{0.23cm}}lcccccccccccccccc}
    \toprule
    \multirow{2}{*}{\textbf{Mode}}  & \multirow{2}{*}{\textbf{Model}}  & \multicolumn{4}{c}{ \textbf{BC5CDR(Big Dict)}} & \multicolumn{4}{c}{ \textbf{BC5CDR(Small Dict)}}  & \multicolumn{4}{c}{ \textbf{CoNLL2003(KB)}}  & \multicolumn{4}{c}{ \textbf{CoNLL2003(Dict)}} \\ \cmidrule(lr){3-6} \cmidrule(lr){7-10} \cmidrule(lr){11-14}  \cmidrule(lr){15-18}
     & & $\textbf{P.}$ & $\textbf{R.}$ & $\textbf{F}_1$ & $\textbf{Acc.}$ & $\textbf{P.}$ & $\textbf{R.}$ & $\textbf{F}_1$ & $\textbf{Acc.}$ & $\textbf{P.}$ & $\textbf{R.}$ & $\textbf{F}_1$ & $\textbf{Acc.}$ & $\textbf{P.}$ & $\textbf{R.}$ & $\textbf{F}_1$ & $\textbf{Acc.}$\\
    \midrule
    \multirow{6}{*}{\textbf{FS}} & {Existing SOTA} & - & - & 90.99 & - & - & - & 90.99 & - & - & - & 94.60 & - & - & - & 94.60 & - \\
    & {BERT} & 79.75 & 88.46 & 83.88 & 95.33 & 79.75 & 88.46 & 83.88 & 95.33 & 88.00 & 90.08 & 89.03 & 97.18 & 88.00 & 90.08 & 89.03 & 97.18 \\
    & {MaxEnt} & 88.24 & 78.79 & 83.24 & 94.36 & 88.24 & 78.79 & 83.24 & 94.36 & 84.68 & 83.18 & 83.92 & 96.44 & 84.68 & 83.18 & 83.92 & 96.44 \\ 
    & {CRF} & 84.06 & 88.42 & 86.19 & 96.88 & 84.06 & 88.42 & 86.19 & 96.88 & 86.71 & 81.15 & 83.84 & 96.12 & 86.71 & 81.15 & 83.84 & 96.12 \\ 
    & {BiLSTM-CRF} & 89.21 & 84.45 & 86.76 & 97.55 & 89.21 & 84.45 & 86.76 & 97.55 & 88.53 & 90.21 & 89.36 & 97.98 & 88.53 & 90.21 & 89.36 & 97.98 \\ 
    \midrule
    \multirow{6}{*}{\textbf{DS}} &  KB Matching & 86.39 & 51.24 & 64.32 & 78.62 &  80.02 & 8.70 & 15.69  & 50.78 & 81.13 & 63.75 & 71.40 & 88.58 & 93.12 & 48.67 & 63.93 & 78.37 \\
    &AutoNER & 82.63 & 77.52 & 79.99  & 97.74 & 81.47 & 11.83 & 20.66  & 66.91 & 73.10 & 63.22 & 67.80 & 84.07 & 82.87 & 48.50 & 61.19 & 75.71 \\
    &BERT-ES & 80.43 & 67.94 & 73.66  & 89.78 & 75.60 & 9.71 & 17.21  & 54.82 & 81.38 & 64.80 & 72.15 & 90.59 & 85.77 & 50.63 & 63.68 & 78.64 \\ 
    &bnPU  &  48.12 & 77.06 & 59.24 & 72.08 &  64.93 & 76.43 & 70.21 & 89.30 & 70.74 & 78.91 & 74.61  & 93.66 & 69.90 & 74.67 & 72.21  & 88.98 \\
    &MPU  &  56.50 & 86.05 & 68.22  & 82.10 & 70.08 & 78.18 & 73.91 & 93.27 & 58.79 & 74.58 & 65.75  & 82.57 & 63.63 & 72.22 & 67.65  & 83.73 \\
    &Top-Neg  &  83.52 & 76.00 & 80.48  & 99.59 & 81.78 & 74.18 & 77.79 & 97.03 & 75.42 & 76.81 & 76.11  & 97.45 & 81.01 & 76.77 & 78.87  & 99.33 \\
    &SANTA  &  80.24 & 71.49 & 75.61 & 91.03 & 81.07 & 69.35 & 74.75 & 94.97 & 74.11 & 79.67 & 76.79 & 96.14 & 80.86 & 74.17 & 77.37  & 95.38 \\
    &\textbf{CMPU} & 72.09 & 87.63 & 79.10  & 96.25 & 75.00 & 75.76 & 75.38 & 95.59 & 71.45 & 80.98 & 75.92 & 95.26 & 79.43 & 77.46 & 78.43 & 96.64 \\
    \bottomrule
    \end{tabular}
    \label{tab:main}
\end{table}
\end{landscape}

\subsubsection{Impact of Constraint Factor $\lambda$}
We analyze the advantages of the constraint non-negative risk estimator in our study. We investigate the impact of the constraint factor $\lambda$ on the validation $F_1$ score, as shown in Figure \ref{impact_lambda}. The findings from Theorem \ref{theo} emphasize that it is not advisable to set a large value for $\lambda$, because CMPU has relatively low $F_1$ scores when $\lambda=2.0$. Interestingly, the optimal performance achieved at $\lambda=0.2$ for BC5CDR and $\lambda=0.1$ for CoNLL2003 highlights the significance of considering both situations comprehensively.

\begin{figure}[htbp]
    \centering
    \subfigure[]{
    \begin{minipage}[t]{0.4\linewidth}
        \centering
        \includegraphics[width=5cm, height=3cm]{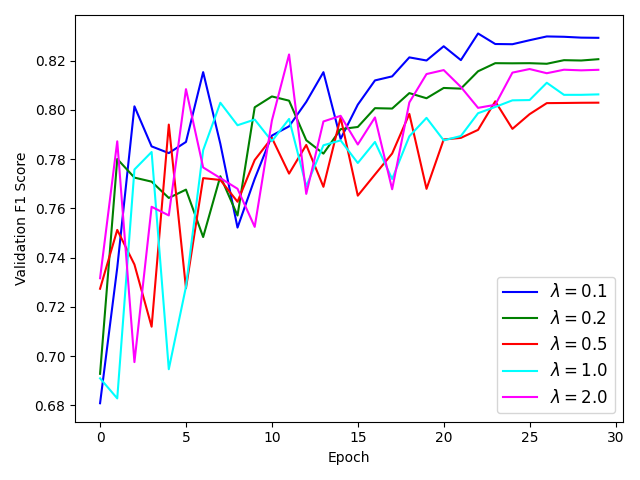}
    \end{minipage}
    }%
    \subfigure[]{
    \begin{minipage}[t]{0.4\linewidth}
        \centering
        \includegraphics[width=5cm, height=3cm]{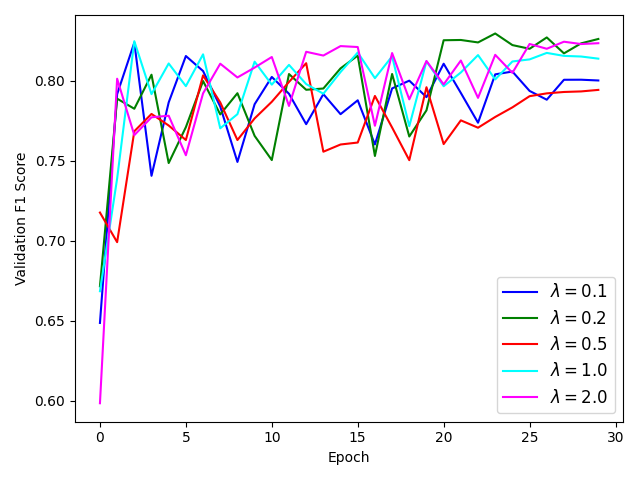}
    \end{minipage}
    }%
    \caption{$F_1$ score curves of different constraint factor $\lambda$ on (a) BC5CDR(Dict) and (b) CoNLL2003(Dict) datasets.}
    \label{impact_lambda}
\end{figure}

\subsubsection{Impact of Dictionary Size}

To analyze the estimation bias in bnPU and MPU resulting from the violation of the PU assumption, we conduct experiments using dictionaries with varying entity coverage. We consider the reference standard dictionaries used for labeling BC5CDR(Big Dict) and CoNLL2003(Dict) as benchmarks. For each dataset, we create a group of dictionaries by selecting the first 20$\%$, 40$\%$, 60$\%$, 80$\%$, and 100$\%$ of entries from the reference dictionaries. bnPU, MPU, and CMPU models are trained on the distantly labeled datasets generated by these dictionaries. The results based on BERT are presented in Figure \ref{dict_coverage} ((a)-(c) and (d)-(f) for the corresponding test sets).

A clear decreasing trend in precision can be observed for bnPU and MPU as the dictionary size increases, while CMPU has a slighter decrease than bnPU and MPU (Figure \ref{dict_coverage} (a)$\&$(d)). This phenomenon arises due to the violation of the PU assumption. As the dictionary coverage increases, the distribution of unlabeled data becomes more similar to the distribution of true negative data, rather than the overall data distribution. Consequently, the risk estimates of bnPU and MPU exhibit higher bias, resulting in lower precision. Although their recalls remain high, the $F_1$ scores still decrease. In contrast, the proposed CMPU effectively overcomes this limitation and achieves good performance across all dictionary sizes.

\begin{figure}[htbp]
    \centering
    \subfigbottomskip=1pt
    \subfigure[]{
    \begin{minipage}[t]{0.30\linewidth}
    \centering
    \includegraphics[width=1.5in]{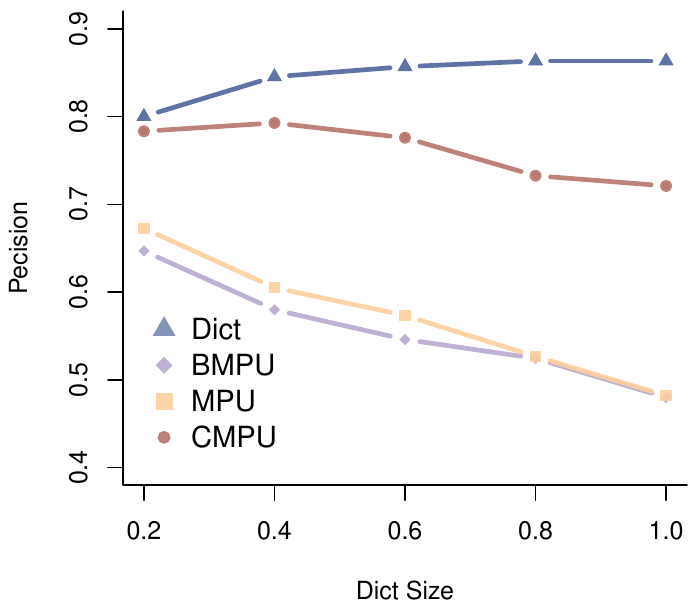}
    \end{minipage}
    }%
    \subfigure[]{
    \begin{minipage}[t]{0.30\linewidth}
    \centering
    \includegraphics[width=1.5in]{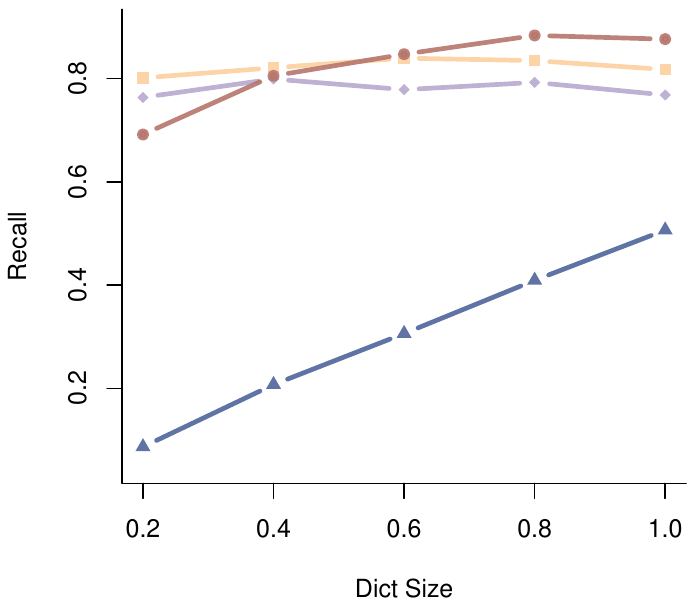}
    \end{minipage}
    }%
    \subfigure[]{
    \begin{minipage}[t]{0.30\linewidth}
    \centering
    \includegraphics[width=1.5in]{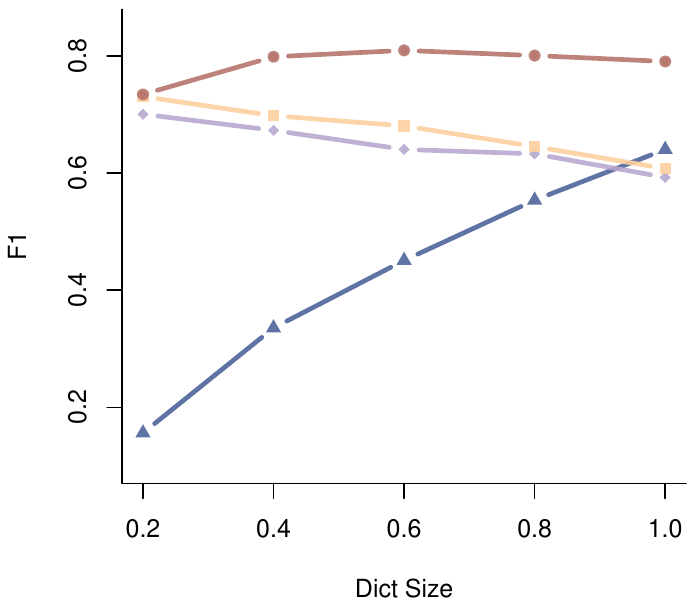}
    \end{minipage}
    }%
    \vspace{.00in}
    \subfigure[]{
    \begin{minipage}[t]{0.30\linewidth}
    \centering
    \includegraphics[width=1.5in]{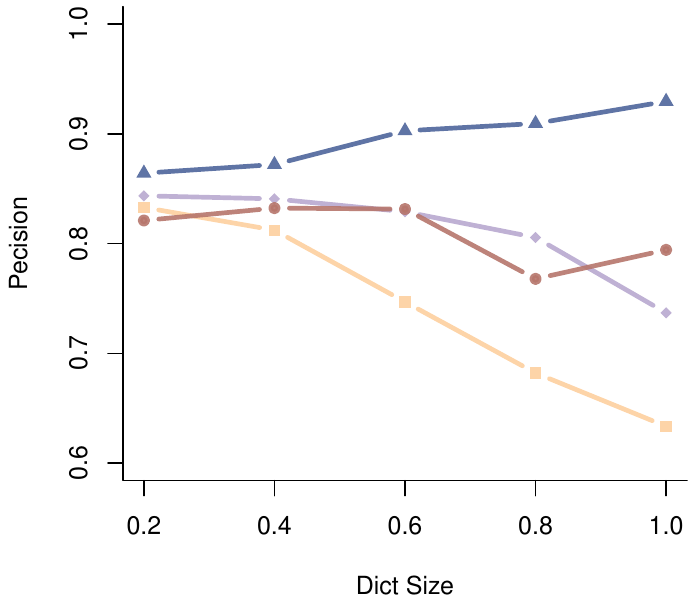}
    \end{minipage}
    }%
    \subfigure[]{
    \begin{minipage}[t]{0.30\linewidth}
    \centering
    \includegraphics[width=1.5in]{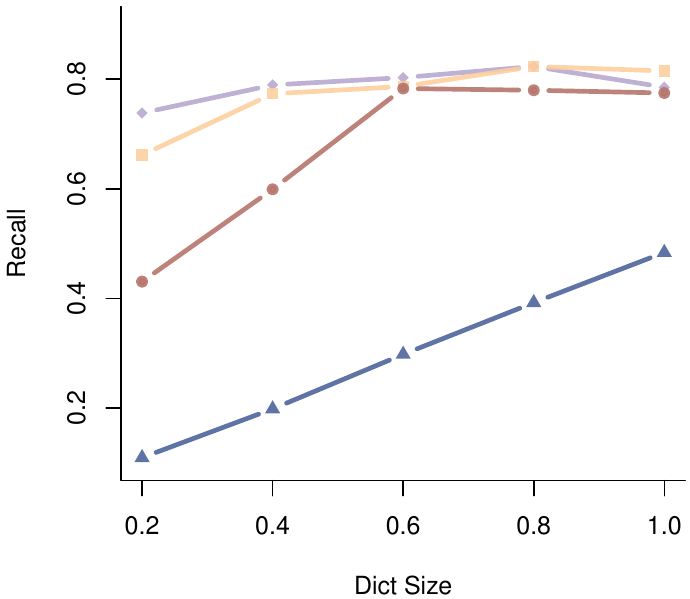}
    \end{minipage}
    }%
    \subfigure[]{
    \begin{minipage}[t]{0.30\linewidth}
    \centering
    \includegraphics[width=1.5in]{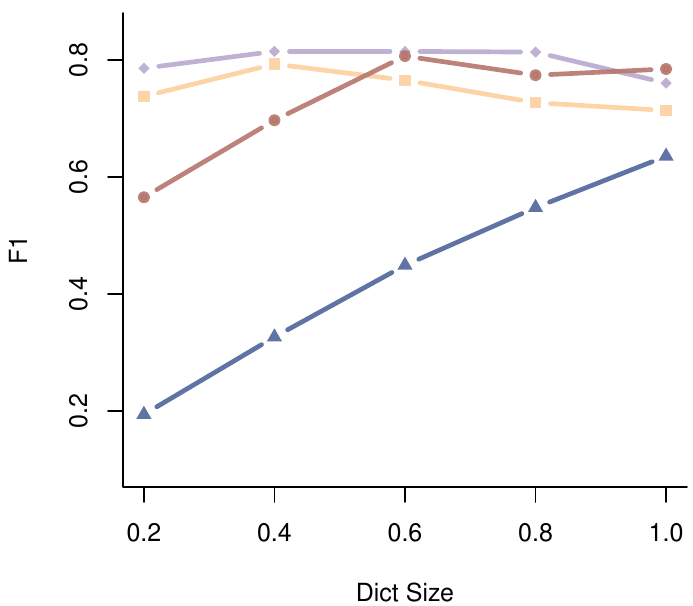}
    \end{minipage}
    }%
    \caption{Precision, recall and $F_1$ scores on BC5CDR ((a)-(c)) and CoNLL2003 ((d)-(f)) datasets under different dictionary sizes.}
    \label{dict_coverage}
\end{figure}

\section{Conclusion}
In this paper, we present CMPU, a novel constraint multi-class positive and unlabeled learning method, for the DS-NER task. CMPU introduces a constraint factor on the risk estimator of positive classes to avoid the fast decrease of PU risk. This approach effectively prevents the model from overfitting to false negatives. Through solid theoretical analysis and extensive empirical studies, we demonstrate the robustness of CMPU to various types of dictionaries and its ability to handle the problem of incomplete labeling. Our results highlight that CMPU significantly reduces the potential risk estimation bias caused by the PU assumption, making it a promising approach for DS-NER tasks.

With the rapid growth of text volumes and entity types, DS-NER methods face significant challenges. To be specific, there is a scarcity of annotated data for the vast majority of entity types. Emerging new entity types pose a continual challenge in creating annotated examples, particularly in valuable domains like biomedicine requiring specialized expertise. In this scenario, constructing a vast knowledge base for distant supervision annotation is extremely challenging.

In the era of large language models (LLMs) \citep{brown2020language, hoffmann2022training, touvron2023llama}, in-context learning \citep{dong2023survey} has become a new paradigm for natural language processing. With only a few task-specific examples as illustrations, LLMs can produce outcomes for a novel test input. Under the framework of in-context learning, LLMs have achieved promising results in named entity extraction \citep{brown2020language, chowdhery2023palm}. 
PromptNER \citep{ashok2023promptner} is a few-shot learning technique designed for NER, necessitating a collection of entity type definitions alongside annotated examples. When presented with a sentence, PromptNER prompts an LLM to generate a list of entities along with corresponding justifications substantiating their alignment with the provided entity type definitions.
GPT-NER \citep{wang2023gptner} transforms NER into a generation task and suggests a self-verification approach to correct the misclassification of NULL inputs as entities. 
\cite{xie2023selfimproving} utilizes a training-free self-improving framework, leveraging LLM to predict unlabeled corpora, thereby acquiring pseudo demonstrations to enhance LLM performance in zero-shot NER.

Despite the significant success of LLMs in NER tasks, we still emphasize the value of DS-NER methods due to the high computational resource requirements of LLMs. We leave the integration of DS-NER methods with LLMs as a future research direction.

\backmatter

\bmhead{Acknowledgments}

The work of HZ is partly supported by the National Natural Science Foundation of China (7209121, 12171451).


\newpage
\begin{appendices}

\section*{Appendix}
\label{sec:appendix}

In appendix, we first provide some necessary notations that will be used in the proof. Next, we show three Lemmas that will be used in the proof of Theorems \ref{theo} and \ref{theo2}. Finally, we showcase the proof of Theorems \ref{theo} and \ref{theo2}.

\subsection*{Basic Notations}
We summarize frequently used notations in the following list.
\begin{table}[h]
    \centering
    \caption{Notations.}
    \label{ch5_notation}
    \setlength{\extrarowheight}{4pt}
    \begin{tabular}{l r l r}
    \toprule
    Notation & Meaning & Notation & Meaning \\
    \midrule
    $R_{P_i}^{+}(f)$ & $\mathbb{E}_{\mathbf{x} \sim p(\mathbf{x}|y=i)} \ell(f(\mathbf{x}), i)$ & $\hat{R}^{+}_{P_i}(f)$ & $\frac{1}{n_{P_i}} \sum_{j=1}^{n_{P_i}} \ell (f(\mathbf{x}_{j}^{P_i}), i)$ \\
    $R_{P_i}^{-}(f)$ & $\mathbb{E}_{\mathbf{x} \sim p(\mathbf{x}|y=0)} \ell(f(\mathbf{x}), 0)$ & $\hat{R}^{-}_{P_i}(f)$ & $\frac{1}{n_{P_i}} \sum_{j=1}^{n_{P_i}} \ell (f(\mathbf{x}_{j}^{P_i}), 0)$ \\
    $R_{P}^{+}(f)$ & $\sum_{i=1}^{C}\pi_i R_{P_i}^{+}(f)$ & $\hat{R}_{P}^{+}(f)$ & $\sum_{i=1}^{C}\pi_i \hat{R}_{P_i}^{+}(f)$ \\
    $R_{P}^{-}(f)$ & $\sum_{i=1}^{C}\pi_i R_{P_i}^{-}(f)$ & $\hat{R}_{P}^{-}(f)$ & $\sum_{i=1}^{C}\pi_i \hat{R}_{P_i}^{-}(f)$ \\
    $R_{U}^{-}(f)$ & $\mathbb{E}_{\mathbf{x} \sim p(\mathbf{x})} \ell(f(\mathbf{x}), 0)$ & $\hat{R}_{U}^{-}(f)$ & $\frac{1}{n_{U}} \sum_{j=1}^{n_{U}} \ell (f(\mathbf{x}_{j}^{U}), 0)$ \\
    $R_{N}^{-}(f)$ & $R_{U}^{-}(f) - \sum_{i=1}^{C}\pi_i R_{P_i}^{-}(f)$ & $\hat{R}_{N}^{-}(f)$ & $\hat{R}_{U}^{-}(f) - \sum_{i=1}^{C}\pi_i \hat{R}_{P_i}^{-}(f)$ \\
    \bottomrule
    \end{tabular}
\end{table}

\section*{Lemmas}
\begin{Lemma}
    \label{lemma1}
    $\hat{R}_{MPU}(f)$ in Equation \eqref{mpu-estimator} is an unbiased estimator of $R(f)$ in Equation \eqref{mpu}.
\end{Lemma}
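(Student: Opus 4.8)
The plan is to invoke linearity of expectation together with the fact that each empirical average appearing in $\hat{R}_{MPU}(f)$ is built from i.i.d.\ draws from exactly the marginal distribution against which the matching population risk term is defined. Concretely, I would take the expectation of $\hat{R}_{MPU}(f)$ in Equation \eqref{mpu-estimator} over the joint randomness of the independent samples $\mathcal{X}_{P_i}=\{\mathbf{x}_j^{P_i}\}_{j=1}^{n_{P_i}}$, drawn from $p(\mathbf{x}|y=i)$, and $\mathcal{X}_U=\{\mathbf{x}_j^{U}\}_{j=1}^{n_U}$, drawn from $p_U(\mathbf{x})$, and then push the expectation through the finite sums defining the estimator, treating the priors $\pi_i$ as fixed constants.

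First I would handle the positive terms. Since the $n_{P_i}$ points $\mathbf{x}_j^{P_i}$ are i.i.d.\ from $p(\mathbf{x}|y=i)$, each summand $\ell(f(\mathbf{x}_j^{P_i}), i)$ has expectation $R_{P_i}^{+}(f)$, so $\mathbb{E}[\hat{R}_{P_i}^{+}(f)]=R_{P_i}^{+}(f)$; the identical argument applied with the label $0$ instead of $i$ gives $\mathbb{E}[\hat{R}_{P_i}^{-}(f)]=R_{P_i}^{-}(f)$. Next I would treat the unlabeled term: because the $n_U$ points $\mathbf{x}_j^{U}$ are i.i.d.\ from $p_U(\mathbf{x})$, we get $\mathbb{E}[\hat{R}_U^{-}(f)]=\mathbb{E}_{\mathbf{x}\sim p_U(\mathbf{x})}\ell(f(\mathbf{x}),0)$, which equals $R_U^{-}(f)$ precisely by the PU assumption $p_U(\mathbf{x})=p(\mathbf{x})$ invoked at step $(i)$ of the derivation of Equation \eqref{mpu}.

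Combining these three identities via linearity of expectation yields
$\mathbb{E}[\hat{R}_{MPU}(f)]=\sum_{i=1}^{C}\pi_i R_{P_i}^{+}(f)+R_U^{-}(f)-\sum_{i=1}^{C}\pi_i R_{P_i}^{-}(f)$, which is exactly $R(f)$ as written in Equation \eqref{mpu}. There is no genuine obstacle here, since the claim reduces to the unbiasedness of the empirical mean applied termwise; the only points requiring care are keeping the priors $\pi_i$ outside the expectation as deterministic weights and making the PU assumption explicit for the unlabeled term, as that is the single place where the equality $R_U^{-}(f)=R^{-}(f)$ is not automatic but rather imposed by the modeling assumption.
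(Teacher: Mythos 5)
Your proposal is correct and follows essentially the same route as the paper's proof: linearity of expectation applied termwise, with unbiasedness of each empirical mean $\hat{R}_{P_i}^{+}(f)$, $\hat{R}_{P_i}^{-}(f)$, and $\hat{R}_{U}^{-}(f)$ following from the i.i.d.\ sampling of $\mathcal{X}_{P_i}$ and $\mathcal{X}_{U}$ from their respective marginals. Your only refinement is making explicit that the unlabeled term relies on the PU assumption $p_U(\mathbf{x})=p(\mathbf{x})$, which the paper folds into its ``similarly'' step; this is a harmless and slightly more careful presentation of the same argument.
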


\begin{proof}
    \begin{equation*}
        \begin{aligned}
            \mathbb{E}[\hat{R}_{P_i}^{+}(f)] = & \mathbb{E}[\frac{1}{n_{P_i}} \sum_{j=1}^{n_{P_i}} \ell(f(\mathbf{x}_{j}^{P_i}), i)] \\
            = & \frac{1}{n_{P_i}} \sum_{j=1}^{n_{P_i}} \mathbb{E}[\ell(f(\mathbf{x}_{j}^{P_i}), i)] \\
            = & \frac{1}{n_{P_i}} \times n_{P_i} \mathbb{E}_{\mathbf{x}_{j}^{P_i} \sim p(\mathbf{x} | y=i)} [\ell(f(\mathbf{x}_{j}^{P_i}), i)] \\
            = & \frac{1}{n_{P_i}} \times n_{P_i} \mathbb{E}_{\mathbf{x} \sim p(\mathbf{x} | y=i)} [\ell(f(\mathbf{x}), i)] \\
            = & R_{P_i}^{+}(f).
        \end{aligned}
    \end{equation*}
    Similarly, we can derive that $\mathbb{E}[\hat{R}_{U}^{-}(f)]=R_{U}^{-}(f)$, and $\mathbb{E}[\hat{R}_{P_i}^{-}(f)]=R_{P_i}^{-}(f)$. Thus we have $\mathbb{E}[\hat{R}_{MPU}(f)]=R(f)$.
\end{proof}

\begin{Lemma}
    \label{lemma2}
    For any real numbers $x_1, x_2, x_3, x_4 \in \mathbb{R}$, the following inequality holds:
    $$|\max \left\{x_1, x_2\right\} - \max \left\{x_3, x_4\right\}| \leq |x_1 - x_3| + |x_2 - x_4|.$$
\end{Lemma}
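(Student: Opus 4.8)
The plan is to prove the inequality
$$|\max\{x_1, x_2\} - \max\{x_3, x_4\}| \leq |x_1 - x_3| + |x_2 - x_4|$$
by a standard case analysis on which argument attains each maximum, combined with the elementary bound that a single difference of the form $|x_i - x_j|$ is dominated by the full right-hand side. The key structural observation is that $\max\{a,b\} - \max\{c,d\}$ is always sandwiched between two differences of matched pairs: concretely, for any choice of which term realizes the left maximum, subtracting the correspondingly-indexed right term gives a valid one-sided estimate.

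First I would handle the upper bound $\max\{x_1,x_2\} - \max\{x_3,x_4\}$. Suppose without loss of generality that $\max\{x_1,x_2\} = x_1$ (the case $x_2$ is symmetric). Since $\max\{x_3,x_4\} \geq x_3$, we get
$$\max\{x_1,x_2\} - \max\{x_3,x_4\} \leq x_1 - x_3 \leq |x_1 - x_3| \leq |x_1 - x_3| + |x_2 - x_4|.$$
By the parallel argument with the roles of the two maxima interchanged, I would also obtain $\max\{x_3,x_4\} - \max\{x_1,x_2\} \leq |x_1 - x_3| + |x_2 - x_4|$, now using $\max\{x_3,x_4\} = x_3$ (or $x_4$) and bounding below by the correspondingly-indexed term of the first maximum. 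Combining these two one-sided bounds yields the absolute-value inequality directly, since $|u| = \max\{u, -u\}$.

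There is essentially no hard part here; the only thing to be careful about is the bookkeeping in the case split, namely ensuring that whichever index realizes one maximum is compared against the \emph{same} index in the other maximum so that the resulting single difference carries a matching subscript. The cleanest way to present this, which I would adopt to avoid enumerating all four sub-cases explicitly, is to note that for \emph{every} pair $(k,k')$ with $k\in\{1,2\}$ the value $x_k - x_{k'+2}$ lower-bounds $\max\{x_1,x_2\}-\max\{x_3,x_4\}$ is not needed; rather I pick the realizing index for the larger max and the matched index for the smaller one, giving the sign-correct estimate. The entire argument is a two-line consequence of monotonicity of $\max$ and the triangle-type padding $|x_i - x_j| \leq |x_1-x_3| + |x_2-x_4|$.
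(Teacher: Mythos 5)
Your proof is correct, but it takes a genuinely different route from the paper's. You argue by cases on which argument realizes the left-hand maximum: if $\max\{x_1,x_2\}=x_k$ with $k\in\{1,2\}$, then since $\max\{x_3,x_4\}\geq x_{k+2}$ you get $\max\{x_1,x_2\}-\max\{x_3,x_4\}\leq x_k-x_{k+2}\leq |x_k-x_{k+2}|$, and the symmetric argument with the two maxima interchanged, combined with $|u|=\max\{u,-u\}$, finishes the job. The paper instead avoids any case split: it uses the closed-form identity $\max\{a,b\}=\frac{a+b+|a-b|}{2}$, expands the difference of the two maxima, and applies the triangle inequality together with the reverse triangle inequality $\bigl||x_1-x_2|-|x_3-x_4|\bigr|\leq |(x_1-x_3)-(x_2-x_4)|\leq |x_1-x_3|+|x_2-x_4|$. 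Both arguments are elementary and short, but they buy slightly different things. Your matched-index argument in fact establishes the stronger bound $|\max\{x_1,x_2\}-\max\{x_3,x_4\}|\leq \max\{|x_1-x_3|,|x_2-x_4|\}$ (the statement that $\max$ is $1$-Lipschitz in the sup-norm), of which the lemma's sum bound is a weakening; the paper's identity-based computation yields only the sum bound, but does so in one symmetric chain of inequalities with no bookkeeping over realizing indices, which is arguably easier to verify line by line. One cosmetic remark: the final paragraph of your write-up contains a garbled sentence (``the value $x_k - x_{k'+2}$ lower-bounds \dots is not needed''), but the argument in your second paragraph is already complete, and the matched-index bookkeeping you flag as the only delicate point is handled correctly there.
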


\begin{proof}
    By triangular inequality, we have
    \begin{equation*}
        \begin{aligned}
            & |\max \left\{x_1, x_2\right\} - \max \left\{x_3, x_4\right\}| \\
            = & \left|\frac{x_1 + x_2 + |x_1 - x_2|}{2} - \frac{x_3 + x_4 + |x_3 - x_4|}{2} \right| \\
            = & \frac{1}{2} \left|(x_1 - x_3) + (x_2 - x_4) + (|x_1 - x_2| - |x_3 - x_4|) \right| \\
            \leq & \frac{1}{2} (|x_1 - x_3| + |x_2 - x_4| + ||x_1 - x_2| - |x_3 - x_4||) \\
            \leq & \frac{1}{2} (|x_1 - x_3| + |x_2 - x_4| + |(x_1 - x_2) - (x_3 - x_4)|) \\
            = & \frac{1}{2} (|x_1 - x_3| + |x_2 - x_4| + |(x_1 - x_3) - (x_2 - x_4)|) \\
            \leq & \frac{1}{2} (|x_1 - x_3| + |x_2 - x_4| + |x_1 - x_3| + |x_2 - x_4|) \\
            = & |x_1 - x_3| + |x_2 - x_4|.
        \end{aligned}
    \end{equation*}
\end{proof}

\begin{Lemma}
    \label{lemma3}
    The probability measure of $\mathcal{Q}^{-}(f)$ can be upper bounded as follows:
    \begin{equation}
        \label{bound-lemma}
        \text{Pr}(\mathcal{Q}^{-}(f)) \leq \exp\left(-\frac{2 (\alpha / C_{\ell})^2}{(1 + \lambda^2) \sum_{i=1}^{C} \pi_i^2 / n_{P_i} + 1 / n_{U}} \right).
    \end{equation}
\end{Lemma}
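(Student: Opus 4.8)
The plan is to identify $\mathcal{Q}^{-}(f)$ as the ``bad event'' on which the maximum in the CMPU estimator \eqref{cmpu-estimate} selects its first argument, i.e.
\begin{equation*}
    \mathcal{Q}^{-}(f) = \left\{ \hat{R}_{N}^{-}(f) - \lambda \hat{R}_{P}^{+}(f) < 0 \right\},
\end{equation*}
where $\hat{R}_{N}^{-}(f) = \hat{R}_{U}^{-}(f) - \sum_{i=1}^{C} \pi_i \hat{R}_{P_i}^{-}(f)$ and $\hat{R}_{P}^{+}(f) = \sum_{i=1}^{C} \pi_i \hat{R}_{P_i}^{+}(f)$. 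Writing $S = \hat{R}_{N}^{-}(f) - \lambda \hat{R}_{P}^{+}(f)$, the linearity computed in Lemma \ref{lemma1} gives $\mathbb{E}[S] = R_{N}^{-}(f) - \lambda R_{P}^{+}(f)$, which the standing assumption lower-bounds by $\alpha$. First I would convert the event into a one-sided deviation from the mean,
\begin{equation*}
    \text{Pr}(\mathcal{Q}^{-}(f)) = \text{Pr}(S < 0) \leq \text{Pr}(S - \mathbb{E}[S] \leq -\alpha),
\end{equation*}
so that a concentration inequality can be applied.

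Next I would regard $S$ as a function of the mutually independent samples $\{\mathbf{x}_j^{U}\}_{j=1}^{n_U}$ and $\{\mathbf{x}_j^{P_i}\}_{j=1}^{n_{P_i}}$ for $1 \leq i \leq C$, and invoke McDiarmid's bounded-difference inequality, which gives $\text{Pr}(S - \mathbb{E}[S] \leq -\alpha) \leq \exp(-2\alpha^2 / \sum_k c_k^2)$ once the per-sample bounded differences $c_k$ are controlled. Since every loss value lies in $[0, C_{\ell}]$, replacing a single unlabeled point perturbs $S$ only through $\hat{R}_{U}^{-}$, so its bounded difference is $C_{\ell}/n_U$ and it contributes $C_{\ell}^2/n_U$ to $\sum_k c_k^2$. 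The delicate terms are the positive samples, because each $\mathbf{x}_j^{P_i}$ enters $S$ simultaneously through $\ell(f(\mathbf{x}_j^{P_i}),0)$ (with coefficient $\pi_i/n_{P_i}$) and through $\ell(f(\mathbf{x}_j^{P_i}),i)$ (with coefficient $\lambda \pi_i/n_{P_i}$).

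The main obstacle is the bounded-difference accounting for these positive samples, which must yield exactly the factor $(1+\lambda^2)$ rather than the factor $(1+\lambda)^2$ produced by a crude triangle-inequality bound on the combined perturbation. I would therefore track the two loss channels of each positive class as separate contributions to $\sum_k c_k^2$: the $\ell(\cdot,0)$ channel of class $i$ adds $\pi_i^2 C_{\ell}^2/n_{P_i}$ and the $\lambda\,\ell(\cdot,i)$ channel adds $\lambda^2 \pi_i^2 C_{\ell}^2/n_{P_i}$, so class $i$ contributes $(1+\lambda^2)\pi_i^2 C_{\ell}^2/n_{P_i}$ in total. Summing over all classes and adding the unlabeled contribution gives $\sum_k c_k^2 = C_{\ell}^2\big((1+\lambda^2)\sum_{i=1}^{C}\pi_i^2/n_{P_i} + 1/n_U\big)$, and substituting this into McDiarmid's inequality produces precisely \eqref{bound-lemma}. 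Keeping these two channels additive in squares, rather than collapsing them into a single $(1+\lambda)$-weighted difference, is the crux that I expect to require the most care.
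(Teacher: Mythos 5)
Your overall route coincides with the paper's: identify $\mathcal{Q}^{-}(f)$ with the event $\{\hat{R}_{N}^{-}(f) - \lambda \hat{R}_{P}^{+}(f) < 0\}$, use the unbiasedness from Lemma \ref{lemma1} together with the standing assumption $R_{N}^{-}(f) - \lambda R_{P}^{+}(f) \geq \alpha$ to turn this into a one-sided deviation of size $\alpha$, and apply McDiarmid's bounded-difference inequality. The gap sits exactly at the step you flag as the crux. The ``channel splitting'' you propose is not a legitimate move in McDiarmid's inequality: the constants $c_k$ are indexed by the \emph{independent random variables}, not by the terms of the function. A single positive sample $\mathbf{x}_j^{P_i}$ enters $S = \hat{R}_{N}^{-}(f) - \lambda \hat{R}_{P}^{+}(f)$ through the combined expression $-\tfrac{\pi_i}{n_{P_i}}\bigl[\ell(f(\mathbf{x}_j^{P_i}),0) + \lambda\,\ell(f(\mathbf{x}_j^{P_i}),i)\bigr]$, and its bounded difference must control the worst-case oscillation of this whole expression under replacement of that one sample; you cannot book the two losses as if they were driven by two distinct independent variables and add their squared ranges. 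Under the sole hypothesis $0 \leq \ell \leq C_{\ell}$, the honest per-variable constant is $(1+\lambda)\pi_i C_{\ell}/n_{P_i}$, which puts $(1+\lambda)^2 \sum_{i=1}^{C}\pi_i^2/n_{P_i}$ in the denominator --- strictly weaker than the claimed $(1+\lambda^2)$, since $(1+\lambda)^2 = 1+\lambda^2+2\lambda$. So, as written, your argument does not prove the stated inequality.

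What actually rescues the constant is structure specific to the MAE loss that the paper commits to: for the one-hot targets $0$ and $i \neq 0$ and any softmax output $t$, the triangle inequality gives $\ell(t,0) + \ell(t,i) \geq \tfrac{2}{C+1} = C_{\ell}$, i.e.\ the two losses cannot be simultaneously small. Hence $\ell(\cdot,0) + \lambda\,\ell(\cdot,i)$ ranges within $[\min(1,\lambda)C_{\ell},\, (1+\lambda)C_{\ell}]$, so its oscillation is at most $\max(1,\lambda)\,C_{\ell}$, and since $\max(1,\lambda)^2 = \max(1,\lambda^2) \leq 1+\lambda^2$, McDiarmid then delivers \eqref{bound-lemma} (indeed with the slightly better constant $\max(1,\lambda^2)$). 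To be fair, the paper's own proof is equally terse at this point --- it states the individual bounded differences of $\hat{R}_{P_i}^{+}$ and $\hat{R}_{U}^{-}$ and then simply asserts the $(1+\lambda^2)$ denominator --- so you have reproduced the paper's argument including its unjustified step; the difference is that your proposal supplies an explicit justification for that step which is incorrect, rather than leaving it implicit.
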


\begin{proof}
    We assume that there exists $\alpha > 0$, such that $R_{N}^{-}(f) - \lambda R_{P}^{+}(f) \geq \alpha$. For fixed $f$, partition all possible $\mathcal{X}$ into two disjoint sets
    \begin{equation*}
        \tiny
        \mathcal{Q}^{+}(f)=\left\{\mathcal{X} | \hat{R}_{N}^{-}(f) - \lambda \hat{R}_{P}^{+}(f) \geq 0 \right\},
    \end{equation*}
    and 
    \begin{equation*}
        \tiny
        \mathcal{Q}^{-}(f)=\left\{\mathcal{X} | \hat{R}_{N}^{-}(f) - \lambda \hat{R}_{P}^{+}(f) < 0 \right\}.
    \end{equation*}
    Let $F_{P_i}(\mathcal{X}_{P_i})$ be the cumulative distribution function of $\mathcal{X}_{P_i}$, $F_{U}(\mathcal{X}_{U})$ be that of $\mathcal{X}_{U}$, and
    \begin{equation*}
        F(\mathcal{X})=F(\mathcal{X}_{P_1}, \dots, \mathcal{X}_{P_C}, \mathcal{X}_{U})=\prod_{j=1}^{C} F_{P_j}(\mathcal{X}_{P_j}) F_{U}(\mathcal{X}_{U})
    \end{equation*}
    be the joint cumulative distribution function of $\mathcal{X}=(X_{P_1}, \dots, X_{P_C}, X_{U})$. Thus the probability measure of $\mathcal{Q}^{-}(f)$ is defined by
    $$\text{Pr}(\mathcal{Q}^{-}(f))=\int_{\mathcal{X} \in \mathcal{Q}^{-}(f)} dF(\mathcal{X}).$$
    Since $\hat{R}_{CMPU}(f)$ is identical to $\hat{R}_{MPU}(f)$ on $\mathcal{Q}^{+}(f)$ and different from $\hat{R}_{MPU}(f)$ on $\mathcal{Q}^{-}(f)$, we have $\text{Pr}(\mathcal{Q}^{-}(f))=\text{Pr}(\hat{R}_{CMPU}(f) \neq \hat{R}_{MPU}(f))$. By Lemma \ref{lemma1} we have
    \begin{equation*}
        \begin{aligned}
            & \mathbb{E}[\hat{R}_{CMPU}(f)] - R_{MPU}(f) = \mathbb{E}[\hat{R}_{CMPU}(f) - \hat{R}_{MPU}(f)] \\
            =& \int_{\mathcal{X} \in \mathcal{Q}^{+}(f)} [\hat{R}_{CMPU}(f) - \hat{R}_{MPU}(f)] dF(\mathcal{X}) + \int_{\mathcal{X} \in \mathcal{Q}^{-}(f)} [\hat{R}_{CMPU}(f) - \hat{R}_{MPU}(f)] dF(\mathcal{X}) \\
            =& \int_{\mathcal{X} \in \mathcal{Q}^{-}(f)} [\hat{R}_{CMPU}(f) - \hat{R}_{MPU}(f)] dF(\mathcal{X}).
        \end{aligned}
    \end{equation*}
    As a result, $\mathbb{E}[\hat{R}_{CMPU}(f)] - R_{MPU}(f) > 0$ if and only if $\int_{\mathcal{X} \in \mathcal{Q}^{-}(f)} [\hat{R}_{CMPU}(f) - R_{MPU}(f)] dF(\mathcal{X}) > 0$ due to the fact that $\hat{R}_{CMPU}(f) - \hat{R}_{MPU}(f) > 0$ on $\mathcal{Q}^{-}(f)$.

    We have assume that $0 \leq \ell(t, i) \leq C_{\ell}, 0 \leq i \leq C$, thus the change of $\hat{R}_{P_i}^{+}(f)$ will be no more than $C_{\ell} / n_{P_i}$ if some $x_{j}^{P_i} \in \mathcal{X}_{P_i}$ is replaced, and the change of $\hat{R}_{U}^{-}(f)$ will be no more than $C_{\ell} / n_{U}$ if some $x_{j}^{U} \in \mathcal{X}_{U}$ is replaced. Subsequently, by Lemma \ref{lemma1}, we have $\mathbb{E}[\hat{R}_{N}^{-}(f) - \lambda \hat{R}_{P}^{+}(f)]=R_{N}^{-}(f) - \lambda R_{P}^{+}(f)$, and McDiarmid's inequality ((D.16) in \cite{mohri2018foundations}),
    \begin{equation*}
        \begin{aligned}
            & \text{Pr}\left\{\lambda \hat{R}_{P}^{+}(f) - \hat{R}_{N}^{-}(f) - (\lambda R_{P}^{+}(f) - R_{N}^{-}(f)) \geq \alpha \right\} \\
            \leq & \exp\left(-\frac{2\alpha^2}{(1 + \lambda^2) C_{\ell}^2 \sum_{i=1}^{C} n_{P_i} (\frac{\pi_i}{n_{P_i}})^2 + n_{U} (\frac{C_{\ell}}{n_{U}})^2} \right) \\
            = & \exp\left(-\frac{2 (\alpha / C_{\ell})^2}{(1 + \lambda^2) \sum_{i=1}^{C} \pi_i^2 / n_{P_i} + 1 / n_{U}} \right).
        \end{aligned}
    \end{equation*}

    Taking into account that
    \begin{equation*}
        \begin{aligned}
            & \text{Pr}(\mathcal{Q}^{-}(f)) \\ 
            = & \text{Pr}(\hat{R}_{N}^{-}(f) - \lambda \hat{R}_{P}^{+}(f) < 0) \\
            \leq & \text{Pr}(\hat{R}_{N}^{-}(f) - \lambda \hat{R}_{P}^{+}(f) \leq R_{N}^{-}(f) - \lambda R_{P}^{+}(f) - \alpha) \\
            = & \text{Pr}(\hat{R}_{N}^{-}(f) - \lambda \hat{R}_{P}^{+}(f) - (R_{N}^{-}(f) - \lambda R_{P}^{+}(f)) \geq \alpha) \\
            \leq & \exp\left(-\frac{2 (\alpha / C_{\ell})^2}{(1 + \lambda^2) \sum_{i=1}^{C} \pi_i^2 / n_{P_i} + 1 / n_{U}} \right),
        \end{aligned}
    \end{equation*}
    the proof is completed. Denote the RHS of Equation \eqref{bound-lemma} as $\Delta_{f}$ later for convenience.
\end{proof}

\subsection*{Proof of Theorems \ref{theo} and \ref{theo2}}
\noindent \textbf{Theorem 1.}  For any $\delta > 0$, with probability at least $1 - \delta$, 
\begin{equation*}
		|\hat{R}_{CMPU}(f) - R(f)| \leq  C_{\delta} C_{\lambda} \left(\sum_{i=1}^{C} \frac{\pi_i}{\sqrt{n_{P_i}}} + \frac{1}{\sqrt{n_U}}\right) + \left((1 + \lambda) \sum_{i=1}^{C} \pi_i + 1 \right) C_{\ell} \Delta_{f},
\end{equation*}
where $$C_{\delta}=C_{\ell} \sqrt{\ln (2 / \delta) / 2},C_{\lambda}=\max\left\{2, 1 + \lambda \right\}.$$

\begin{proof}
    From the proof of Lemma \ref{lemma3}, we have known that
    \begin{equation*}
        \mathbb{E}[\hat{R}_{CMPU}(f)] - R_{MPU}(f) = \int_{\mathcal{X} \in \mathcal{Q}^{-}(f)} [\hat{R}_{CMPU}(f) - \hat{R}_{MPU}(f)] dF(\mathcal{X}).
    \end{equation*}
    Thus we have
    \begin{equation*}
        \begin{aligned}
            & \mathbb{E}[\hat{R}_{CMPU}(f)] - R_{MPU}(f) \\
             \leq & \sup_{\mathcal{X} \in \mathcal{Q}^{-}(f)} |\hat{R}_{CMPU}(f) - \hat{R}_{MPU}(f)|\int_{\mathcal{X} \in \mathcal{Q}^{-}(f)} dF(\mathcal{X}) \\
            = & \sup_{\mathcal{X} \in \mathcal{Q}^{-}(f)} |\lambda \hat{R}_{P}^{+}(f) - \hat{R}_{N}^{-}(f)| \text{Pr}(\mathcal{Q}^{-}(f)) \\
            \leq & [\lambda \hat{R}_{P}^{+}(f) + \hat{R}^{-}_{U}(f) + \hat{R}^{-}_{P}(f)] \text{Pr}(\mathcal{Q}^{-}(f)) \\
            \leq & (\lambda C_{\ell} \sum_{i=1}^{C} \pi_i + C_{\ell} + C_{\ell} \sum_{i=1}^{C} \pi_i) \text{Pr}(\mathcal{Q}^{-}(f)) \\
            \leq & \left((1 + \lambda) \sum_{i=1}^{C} \pi_i + 1 \right) C_{\ell} \Delta_{f}.
        \end{aligned}
    \end{equation*}
    
    Due to the fact that 
    \begin{equation*}
        \begin{aligned}
            & |\hat{R}_{CMPU}(f) - R_{MPU}(f)| \\
            = & \left|\hat{R}_{CMPU}(f) - \mathbb{E}[\hat{R}_{CMPU}(f)] + \mathbb{E}[\hat{R}_{CMPU}(f)] - R_{MPU}(f) \right| \\
            \leq & \left|\hat{R}_{CMPU}(f) - \mathbb{E}[\hat{R}_{CMPU}(f)] \right| + \left|\mathbb{E}[\hat{R}_{CMPU}(f)] - R_{MPU}(f) \right| \\
            \leq & \left|\hat{R}_{CMPU}(f) - \mathbb{E}[\hat{R}_{CMPU}(f)] \right| + \left((1 + \lambda) \sum_{i=1}^{C} \pi_i + 1 \right) C_{\ell} \Delta_{f},
        \end{aligned}
    \end{equation*}
    we only need to bound $|\hat{R}_{CMPU}(f) - \mathbb{E}[\hat{R}_{CMPU}(f)]|$. It can be known that the change of $\hat{R}_{CMPU}(f)$ will be no more than $\max\{1 + \lambda, 2\} C_{\ell} \pi_i / n_{P_i}$ if some $x_{j}^{P_i} \in \mathcal{X}_{P_i}$ is replaced, and the change of $\hat{R}_{CMPU}(f)$ will be no more than $C_{\ell} / n_{U}$ if some $x_{j}^{U} \in \mathcal{X}_{U}$ is replaced. Denote $C_{\lambda}=\max \left\{2, 1 + \lambda \right\}$ for convenience. By double-side McDiarmid's inequality ((D.16) and (D.17) in \cite{mohri2018foundations}), for any $\epsilon > 0$, we have
    \begin{equation*}
        \text{Pr}\left(|\hat{R}_{CMPU}(f) - \mathbb{E}[\hat{R}_{CMPU}(f)]| \geq \epsilon \right) \leq 2 \exp\left(\frac{-2 (\epsilon / C_{\ell})^2}{C_{\lambda}^2 \sum_{i=1}^{C} \pi_i^2 / n_{P_i} + 1 / n_{U}}\right).
    \end{equation*}
    In other words, let $C_{\delta}=C_{\ell}\sqrt{\ln(2 / \delta) / 2}$, with probability at least $1 - \delta$,
    as $n_{P_i}, n_{U} \to \infty$, the unconsistency decays exponentially. That is, 
    \begin{equation*}
		\begin{aligned}
		  & |\hat{R}_{CMPU}(f) - R(f)| \\ 
            \leq & C_{\delta} \sqrt{C_{\lambda}^2 \sum_{i=1}^{C} \pi_i^2 / n_{P_i} + 1 / n_{U}} + \left((1 + \lambda) \sum_{i=1}^{C} \pi_i + 1 \right) C_{\ell} \Delta_{f} \\
            \leq & C_{\delta} C_{\lambda} \left(\sum_{i=1}^{C} \frac{\pi_i}{\sqrt{n_{P_i}}} + \frac{1}{\sqrt{n_U}}\right) + \left((1 + \lambda) \sum_{i=1}^{C} \pi_i + 1 \right) C_{\ell} \Delta_{f},
		\end{aligned}
    \end{equation*}
    where the last inequality is derived by $\sqrt{\sum_{i=1}^{n} a_i} \leq \sum_{i=1}^{n} \sqrt{a_i}, a_i \geq 0$.
\end{proof}

\noindent \textbf{Theorem 2.}  Let $\hat{f}_{CMPU}=\arg \min_{f \in \mathcal{H}} \hat{R}_{CMPU}(f)$, $f^{*}=\arg \min_{f \in \mathcal{H}} R(f)$. Then, for any $\delta > 0$, with probability at least $1 - \delta$,
\begin{equation}
    \begin{aligned}
        R(\hat{f}_{CMPU}) - R(f^{*}) \leq & (16 + 8 \lambda) L_{\ell} \sum_{i=1}^{C} \pi_i \mathfrak{R}_{n_{P_i}, p_{P_i}}(\mathcal{H}) + 8L_{\ell} \mathfrak{R}_{n_{U}, p}(\mathcal{H})  \\
        & + 2 C_{\delta}^{'} C_{\lambda} \left(\sum_{i=1}^{C} \frac{\pi_i}{\sqrt{n_{P_i}}} + \frac{1}{\sqrt{n_U}}\right) 
        + 2 \left((1 + \lambda) \sum_{i=1}^{C} \pi_i + 1 \right) C_{\ell} \Delta_{f}.
    \end{aligned}
\end{equation}

\begin{proof}
    Define 
    $$\mathfrak{R}_{n, p}^{'}(\mathcal{H})=\mathbb{E}_{\mathcal{X}} \mathbb{E}_{\mathbf{\sigma}}\left[\sup_{f \in \mathcal{H}} \left|\frac{1}{n} \sum_{i=1}^{n} \sigma_i f(\mathbf{x}_i) \right|\right], \quad \mathfrak{R}_{n, p}(\mathcal{H})=\mathbb{E}_{\mathcal{X}} \mathbb{E}_{\mathbf{\sigma}}\left[\sup_{f \in \mathcal{H}} \frac{1}{n} \sum_{i=1}^{n} \sigma_i f(\mathbf{x}_i) \right].$$
    We first bound term $\sup_{f \in \mathcal{H}} |\hat{R}_{CMPU}(f) - R_{MPU}(f)|$.
    It is know that
    \begin{equation*}
        \begin{aligned}
            & \sup_{f \in \mathcal{H}} |\hat{R}_{CMPU}(f) - R_{MPU}(f)| \\ 
            \leq & \sup_{f \in \mathcal{H}} |\hat{R}_{CMPU}(f) - \mathbb{E}[\hat{R}_{MPU}(f)]| + \sup_{f \in \mathcal{H}} |\mathbb{E}[\hat{R}_{MPU}(f)] - \hat{R}_{CMPU}(f)| \\
            \leq & \sup_{f \in \mathcal{H}} |\hat{R}_{CMPU}(f) - \mathbb{E}[\hat{R}_{MPU}(f)]| + \left((1 + \lambda) \sum_{i=1}^{C} \pi_i + 1 \right) C_{\ell} \Delta_{f}.
        \end{aligned}
    \end{equation*}
    By single-side McDiamid's inequality, with probability at least $1 - \delta$,
    \begin{equation*}
        \begin{aligned}
            &\sup_{f \in \mathcal{H}} |\hat{R}_{CMPU}(f) - \mathbb{E}[\hat{R}_{MPU}(f)]| - \mathbb{E}\left[\sup_{f \in \mathcal{H}} |\hat{R}_{CMPU}(f) - \mathbb{E}[\hat{R}_{MPU}(f)]|\right] \\
            \leq & C_{\delta}^{'} C_{\lambda} \left(\sum_{i=1}^{C} \frac{\pi_i}{\sqrt{n_{P_i}}} + \frac{1}{\sqrt{n_{U}}}\right),
        \end{aligned}
    \end{equation*}
    where $C_{\delta}^{'}=C_{\ell} \sqrt{\ln(1/\delta) / 2}$. This indicates that we only need to bound $$\mathbb{E}\left[\sup_{f \in \mathcal{H}} |\hat{R}_{CMPU}(f) - \mathbb{E}[\hat{R}_{MPU}(f)]|\right]$$ to get the bound of $\sup_{f \in \mathcal{H}} |\hat{R}_{CMPU}(f) - \mathbb{E}[\hat{R}_{MPU}(f)]|$.

    Let $\mathcal{X}^{'}=(\mathcal{X}_{P}^{'}, \mathcal{X}_{U}^{'})$ be an independent copy of $\mathcal{X}=(\mathcal{X}_{P}, \mathcal{X}_{U})$, then we have 
    \begin{equation*}
        \label{bound-symm}
        \begin{aligned}
            & \mathbb{E}\left[\sup_{f \in \mathcal{H}} \left|\hat{R}_{CMPU}(f) - \mathbb{E}[\hat{R}_{MPU}(f)] \right| \right] \\
            = & \mathbb{E}_{(\mathcal{X}_{P}, \mathcal{X}_{U})} \left[\sup_{f \in \mathcal{H}} \left| \hat{R}_{CMPU}(f) - \mathbb{E}_{(\mathcal{X}_{P}^{'}, \mathcal{X}_{U}^{'})}[\hat{R}_{MPU}(f)] \right| \right] \\
            \overset{(i)}{=} & \mathbb{E}_{(\mathcal{X}_{P}, \mathcal{X}_{U})} \left[\sup_{f \in \mathcal{H}} \left| \mathbb{E}_{(\mathcal{X}_{P}^{'}, \mathcal{X}_{U}^{'})} [\hat{R}_{CMPU}(f) - \mathbb{E}_{(\mathcal{X}_{P}^{'}, \mathcal{X}_{U}^{'})}[\hat{R}_{MPU}(f)]] \right| \right] \\
            \overset{(ii)}{\leq} & \mathbb{E}_{(\mathcal{X}_{P}, \mathcal{X}_{U})} \left[\sup_{f \in \mathcal{H}} \mathbb{E}_{(\mathcal{X}_{P}^{'}, \mathcal{X}_{U}^{'})} \left|\hat{R}_{CMPU}(f) - \mathbb{E}_{(\mathcal{X}_{P}^{'}, \mathcal{X}_{U}^{'})}[\hat{R}_{MPU}(f)] \right|\right] \\
            \overset{(iii)}{\leq} & \mathbb{E}_{(\mathcal{X}_{P}, \mathcal{X}_{U})} \mathbb{E}_{(\mathcal{X}_{P}^{'}, \mathcal{X}_{U}^{'})}\left[\sup_{f \in \mathcal{H}} \left| \hat{R}_{CMPU}(f;\mathcal{X}_{P}, \mathcal{X}_{U}) - \hat{R}_{MPU}(f;\mathcal{X}_{P}^{'}, \mathcal{X}_{U}^{'}) \right| \right] \\
            \overset{(iv)}{=} & \mathbb{E}_{(\mathcal{X}_{P}, \mathcal{X}_{U}), (\mathcal{X}_{P}^{'}, \mathcal{X}_{U}^{'})}\left[\sup_{f \in \mathcal{H}} \left|\hat{R}_{CMPU}(f;\mathcal{X}_{P}, \mathcal{X}_{U}) - \hat{R}_{MPU}(f;\mathcal{X}_{P}^{'}, \mathcal{X}_{U}^{'}) \right|\right], \\
        \end{aligned}
    \end{equation*}
    where $\hat{R}_{CMPU}(f;\mathcal{X}_{P}, \mathcal{X}_{U})$ means that the risk $\hat{R}_{CMPU}(f)$ is calculated using random sample $(\mathcal{X}_{P}, \mathcal{X}_{U})$. Equation $(i)$ holds because $\hat{R}_{CMPU}(f)$ is calculated using random sample $(\mathcal{X}_{P}, \mathcal{X}_{U})$, the expectation under sample $(\mathcal{X}_{P}^{'}, \mathcal{X}_{U}^{'})$ does not change the value of $|\hat{R}_{CMPU}(f) - \mathbb{E}_{(\mathcal{X}_{P}^{'}, \mathcal{X}_{U}^{'})}[\hat{R}_{MPU}(f)]|$. Inequality $(ii)$ is derived by Jensen's inequality due to the convexity of absolute function $|\cdot|$. Inequality $(iii)$ is derived by Jensen's inequality due to the convexity of $\sup(\cdot)$. Equation $(iv)$ is due to the independence between $(\mathcal{X}_{P}, \mathcal{X}_{U})$ and $(\mathcal{X}_{P}^{'}, \mathcal{X}_{U}^{'})$, which means that the expectation under their marginal distribution is equivalent to the expectation under their joint distribution.

    Next we should bound $|\hat{R}_{CMPU}(f;\mathcal{X}_{P}, \mathcal{X}_{U}) - \hat{R}_{MPU}(f;\mathcal{X}_{P}^{'}, \mathcal{X}_{U}^{'})|$.
    By some simple calculation, we have
    \begin{equation*}
            \left|\hat{R}_{CMPU}(f;\mathcal{X}_{P}, \mathcal{X}_{U}) - \hat{R}_{MPU}(f;\mathcal{X}_{P}^{'}, \mathcal{X}_{U}^{'}) \right| = \left|\sum_{i=1}^{C} \pi_i (\hat{R}_{P_i}^{+}(f; \mathcal{X}_{P}) - \hat{R}_{P_i}^{+}(f; \mathcal{X}_{P}^{'})) + A - B \right|, 
    \end{equation*}
    where 
    \begin{equation*}
        \begin{aligned}
        A&=\max \left\{\lambda \sum_{i=1}^{C} \pi_i \hat{R}_{P_i}^{+}(f; \mathcal{X}_{P}), \hat{R}_{U}^{-}(f; \mathcal{X}_{U}) - \sum_{i=1}^{C} \pi_i \hat{R}_{P_i}^{-}(f; \mathcal{X}_{P}) \right\}, \\
        B&=\max \left\{\lambda \sum_{i=1}^{C} \pi_i \hat{R}_{P_i}^{+}(f; \mathcal{X}_{P}^{'}), \hat{R}_{U}^{-}(f; \mathcal{X}_{U}^{'}) - \sum_{i=1}^{C} \pi_i \hat{R}_{P_i}^{-}(f; \mathcal{X}_{P}^{'}) \right\}.
        \end{aligned}
    \end{equation*}
    By Lemma \ref{lemma2}, we have
    \begin{equation}
        \small
        \label{bound-lemma2}
        \begin{aligned}
            & \left|\sum_{i=1}^{C} \pi_i (\hat{R}_{P_i}^{+}(f; \mathcal{X}_{P}) - \hat{R}_{P_i}^{+}(f; \mathcal{X}_{P}^{'})) + A - B \right| \\
             \leq & \sum_{i=1}^{C} \pi_i \left| \hat{R}_{P_i}^{+}(f; \mathcal{X}_{P}) - \hat{R}_{P_i}^{+}(f; \mathcal{X}_{P}^{'})\right| + \lambda \sum_{i=1}^{C} \pi_i \left| \hat{R}_{P_i}^{+}(f; \mathcal{X}_{P}) - \hat{R}_{P_i}^{+}(f; \mathcal{X}_{P}^{'})\right| + \\ 
             & |\hat{R}_{U}^{-}(f; \mathcal{X}_{U}) - \hat{R}_{U}^{-}(f; \mathcal{X}_{U}^{'})| + \sum_{i=1}^{C} \pi_i \left| \hat{R}_{P_i}^{-}(f; \mathcal{X}_{P}) - \hat{R}_{P_i}^{-}(f; \mathcal{X}_{P}^{'})\right| \\
             = & (1 + \lambda) \sum_{i=1}^{C} \pi_i \left| \hat{R}_{P_i}^{+}(f; \mathcal{X}_{P}) - \hat{R}_{P_i}^{+}(f; \mathcal{X}_{P}^{'})\right| + |\hat{R}_{U}^{-}(f; \mathcal{X}_{U}) - \hat{R}_{U}^{-}(f; \mathcal{X}_{U}^{'})| + \sum_{i=1}^{C} \pi_i \left| \hat{R}_{P_i}^{-}(f; \mathcal{X}_{P}) - \hat{R}_{P_i}^{-}(f; \mathcal{X}_{P}^{'})\right|.
        \end{aligned}
    \end{equation}
    Combining Equations \eqref{bound-symm} and \eqref{bound-lemma2}, we have
    \begin{equation*}
        \begin{aligned}
            & \mathbb{E}\left[\sup_{f \in \mathcal{H}} |\hat{R}_{CMPU}(f) - \mathbb{E}[\hat{R}_{MPU}(f)]|\right] \\
            \leq & (1 + \lambda)\sum_{i=1}^{C} \pi_i \mathbb{E}_{(\mathcal{X}_{P}, \mathcal{X}_{P}^{'})} \left[\sup_{f \in \mathcal{H}} |\hat{R}_{P_i}^{+}(f; \mathcal{X}_{P}) - \hat{R}_{P_i}^{+}(f; \mathcal{X}_{P}^{'})|\right] + \\
            & \sum_{i=1}^{C} \pi_i \mathbb{E}_{(\mathcal{X}_{P}, \mathcal{X}_{P}^{'})} \left[\sup_{f \in \mathcal{H}} |\hat{R}_{P_i}^{-}(f; \mathcal{X}_{P}) - \hat{R}_{P_i}^{-}(f; \mathcal{X}_{P}^{'})|\right] + \\
            & \mathbb{E}_{(\mathcal{X}_{U}, \mathcal{X}_{U}^{'})} \left[\sup_{f \in \mathcal{H}} |\hat{R}_{U}^{-}(f; \mathcal{X}_{U}) - \hat{R}_{U}^{-}(f; \mathcal{X}_{U}^{'})|\right].
        \end{aligned}
    \end{equation*}

    Then we bound the expectation of each supremum value. Let $\tilde{\ell}(t, y) = \ell(t, y) - \ell(0, y), y \in \left\{0, \dots, C\right\}$. It is easy to verify that $\tilde{\ell}(t, y)$ is Lipschitz continuous with respect to $t$ with a Lipschitz constant $L_{\ell}$, and $\ell(t, y) - \ell(0, y)=\tilde{\ell}(t, y) - \tilde{\ell}(0, y)$.
    Notice that
    \begin{equation*}
        \begin{aligned}
            & \hat{R}_{P_i}^{+}(f;\mathcal{X}_{P}) - \hat{R}_{P_i}^{+}(f;\mathcal{X}_{P}^{'}) \\
            = & \frac{1}{n_{P_i}} \sum_{j=1}^{n_{P_i}} \ell (f(\mathbf{x}_{j}^{P_i}), i) - \frac{1}{n_{P_i}} \sum_{j=1}^{n_{P_i}} \ell (f(\mathbf{x'}_{j}^{P_i}), i) \\
            = & \frac{1}{n_{P_i}} \sum_{j=1}^{n_{P_i}} \left(\ell (f(\mathbf{x}_{j}^{P_i}), i) -  \ell (f(\mathbf{x'}_{j}^{P_i}), i)\right) \\
            = & \frac{1}{n_{P_i}} \sum_{j=1}^{n_{P_i}} \left(\tilde{\ell} (f(\mathbf{x}_{j}^{P_i}), i) -  \tilde{\ell}(f(\mathbf{x'}_{j}^{P_i}), i)\right),
        \end{aligned}
    \end{equation*}
    thus it is already a standard form where we can attach Rademacher variables to every $\tilde{\ell} (f(\mathbf{x}_{j}^{P_i}), i) -  \tilde{\ell}(f(\mathbf{x'}_{j}^{P_i}), i)$. By formula (3.8)-(3.13) in \cite{mohri2018foundations}, we have
    \begin{equation*}
        \mathbb{E}_{(\mathcal{X}_{P}, \mathcal{X}_{P}^{'})} \left[\sup_{f \in \mathcal{H}} |\hat{R}_{P_i}^{+}(f;\mathcal{X}_{P}) - \hat{R}_{P_i}^{+}(f;\mathcal{X}_{P}^{'})|\right] \leq 2 \mathfrak{R}^{'}_{n_{P_i}, p_{P_i}}(\tilde{\ell}(\cdot, i) \circ \mathcal{H}),
    \end{equation*}
    where $\tilde{\ell}(\cdot, i) \circ \mathcal{H}=\left\{\tilde{\ell}(\cdot, i) \circ f | f \in \mathcal{H} \right\}$ is the composite function class of $\tilde{\ell}(\cdot, i)$ and $\mathcal{H}$. The rest two expectations can be bounded in the same way, that is, 
    \begin{equation}
        \label{cmpu-expectation}
        \begin{aligned}
            & \mathbb{E}\left[\sup_{f \in \mathcal{H}} |\hat{R}_{CMPU}(f) - \mathbb{E}[\hat{R}_{MPU}(f)]|\right] \\
            \leq & 2(1 + \lambda) \sum_{i=1}^{C} \pi_i \mathfrak{R}^{'}_{n_{P_i}, p_{P_i}}(\tilde{\ell}(\cdot, i) \circ \mathcal{H}) + 2 \sum_{i=1}^{C} \pi_i \mathfrak{R}^{'}_{n_{P_i}, p_{P_i}}(\tilde{\ell}(\cdot, 0) \circ \mathcal{H}) + 2 \mathfrak{R}^{'}_{n_{U}, p}(\tilde{\ell}(\cdot, 0) \circ \mathcal{H}),
        \end{aligned}
    \end{equation}
    among which 
    $$\mathfrak{R}^{'}_{n_{P_i}, p_{P_i}}(\tilde{\ell}(\cdot, i) \circ \mathcal{H}) \leq 2 L_{\ell} \mathfrak{R}^{'}_{n_{P_i}, p_{P_i}}(\mathcal{H})=2 L_{\ell} \mathfrak{R}_{n_{P_i}, p_{P_i}}(\mathcal{H})$$ 
    by Talagrand's contradiction Lemma \citep{ledoux1991probability} and the assumption that $\mathcal{H}$ is close negation. Similarly, we have
    $$\mathfrak{R}^{'}_{n_{P_i}, p_{P_i}}(\tilde{\ell}(\cdot, 0) \circ \mathcal{H}) \leq 2 L_{\ell} \mathfrak{R}_{n_{P_i}, p_{P_i}}(\mathcal{H}),$$
    $$\mathfrak{R}^{'}_{n_{U}, p}(\tilde{\ell}(\cdot, 0) \circ \mathcal{H}) \leq 2 L_{\ell} \mathfrak{R}_{n_{U}, p}(\mathcal{H}).$$
    Furthermore, \eqref{cmpu-expectation} can be bounded by
    \begin{equation*}
        \begin{aligned}
            & \mathbb{E}\left[\sup_{f \in \mathcal{H}} |\hat{R}_{CMPU}(f) - \mathbb{E}[\hat{R}_{MPU}(f)]|\right] \\
            \leq & 2(1 + \lambda) \sum_{i=1}^{C} \pi_i \mathfrak{R}^{'}_{n_{P_i}, p_{P_i}}(\tilde{\ell}(\cdot, i) \circ \mathcal{H}) + 2 \sum_{i=1}^{C} \pi_i \mathfrak{R}^{'}_{n_{P_i}, p_{P_i}}(\tilde{\ell}(\cdot, 0) \circ \mathcal{H}) + 2 \mathfrak{R}^{'}_{n_{U}, p}(\tilde{\ell}(\cdot, 0) \circ \mathcal{H}) \\
            \leq & (4 + 4 \lambda) \sum_{i=1}^{C} \pi_i \mathfrak{R}_{n_{P_i}, p_{P_i}}(\mathcal{H}) + 4 \sum_{i=1}^{C} \pi_i \mathfrak{R}_{n_{P_i}, p_{P_i}}(\mathcal{H}) + 4 \mathfrak{R}_{n_{U}, p}(\mathcal{H}) \\
            = & (8 + 4 \lambda) \sum_{i=1}^{C} \pi_i \mathfrak{R}_{n_{P_i}, p_{P_i}}(\mathcal{H}) + 4 \mathfrak{R}_{n_{U}, p}(\mathcal{H}).
        \end{aligned}
    \end{equation*}

    Let $\hat{f}_{CMPU}=\arg \min_{f \in \mathcal{H}} \hat{R}_{CMPU}(f)$, $f^{*}=\arg \min_{f \in \mathcal{H}} R(f)$, we have
    \begin{equation*}
        \begin{aligned}
            & R(\hat{f}_{CMPU}) - R(f^{*}) \\ 
            = & \underbrace{(\hat{R}_{CMPU}(\hat{f}_{CMPU}) - \hat{R}_{CMPU}(f^{*}))}_{a} + \underbrace{(R(\hat{f}_{CMPU}) - \hat{R}_{CMPU}(\hat{f}_{CMPU}))}_{b} + \underbrace{(\hat{R}_{CMPU}(f^{*}) - R(f^{*}))}_{c} \\
        \end{aligned}
    \end{equation*}
    Hence $\hat{f}_{CMPU}$ is the minimizer of $\hat{R}_{CMPU}(f)$, $a < 0$. For $b$ and $c$, they both have the form of $\hat{R}_{CMPU}(\cdot) - R(\cdot)$, thus both $b$ and $c$ can be upper bounded by $\sup_{f \in \mathcal{H}} |\hat{R}_{CMPU}(f) - R(f)|$. Then we obtain
    \begin{equation*}
        \begin{aligned}
            & R(\hat{f}_{CMPU}) - R(f^{*}) \leq 0 + 2 \sup_{f \in \mathcal{H}} |\hat{R}_{CMPU}(f) - R(f)| \\
            \leq & (16 + 8 \lambda) L_{\ell} \sum_{i=1}^{C} \pi_i \mathfrak{R}_{n_{P_i}, p_{P_i}}(\mathcal{H}) + 8L_{\ell} \mathfrak{R}_{n_{U}, p}(\mathcal{H}) + \\ 
            & 2 C_{\delta}^{'} C_{\lambda} \left(\sum_{i=1}^{C} \frac{\pi_i}{\sqrt{n_{P_i}}} + \frac{1}{\sqrt{n_U}}\right) + 2 \left((1 + \lambda) \sum_{i=1}^{C} \pi_i + 1 \right) C_{\ell} \Delta_{f}.
        \end{aligned}
    \end{equation*}
\end{proof}

\end{appendices}


\end{document}